\newtheorem{theorem}{Theorem}
\begin{document}

\title{FAT: Learning Low-Bitwidth Parametric Representation \\via Frequency-Aware Transformation}

\author{Chaofan Tao$^1$, \quad Rui Lin$^1$,\quad Quan Chen$^2$,\quad Zhaoyang Zhang$^3$,\quad Ping Luo$^1$,\quad Ngai Wong$^1$\\
$^1$The University of Hong Kong \quad
$^2$SUSTech \quad
$^3$The Chinese University of Hong Kong


}

\maketitle

\begin{abstract}
Learning convolutional neural networks (CNNs) with low bitwidth is challenging because performance may drop significantly after quantization. 
Prior arts often discretize the network weights  by carefully tuning  hyper-parameters of quantization (\eg non-uniform stepsize and layer-wise bitwidths),
which are complicated and sub-optimal because the full-precision and low-precision models have large discrepancy.
%
This work presents a novel quantization pipeline, Frequency-Aware Transformation (FAT), which has several appealing benefits.
%
(1) Rather than designing complicated quantizers like existing works, FAT learns to transform network weights in the frequency domain before quantization, making them more amenable to training in low bitwidth.
(2) With FAT, CNNs can be easily trained in low precision using simple standard quantizers without tedious hyper-parameter tuning. Theoretical analysis shows that FAT improves both uniform and non-uniform quantizers.
(3) FAT can be easily plugged into many CNN architectures. When training ResNet-18 and MobileNet-V2 in 4 bits, FAT plus a simple rounding operation\footnote{A rounding operation uniformly quantizes a continuous value to its nearest discrete value.} already achieves 70.5\% and 69.2\% top-1 accuracy on ImageNet without bells and whistles, outperforming recent state-of-the-art by reducing 54.9$\times$ and 45.7$\times$ computations against full-precision models.  We hope FAT provide a novel perspective for model quantization. Code is available at \textcolor{blue}{\url{https://github.com/ChaofanTao/FAT_Quantization}}.
\end{abstract}
\section{Introduction}
\label{sec:intro}
Convolutional neural networks (CNNs) have exhibited impressive capabilities in various real-world applications. However, such amazing performance usually comes at the expense of a humongous amount of storage and computation. For example, ResNet-101~\cite{he2016deep} entails 44.6M parameters and over $7.9\times10^3$M multiply-accumulate (MAC) operations per image in ImageNet~\cite{russakovsky2015imagenet}. These requirements hinder mobile applications (e.g., on cell phones and robots) where devices have restrictive storage, power and computing resources. In order to compress CNNs, pruning \cite{kang2020operation, lin2020hrank} and distillation \cite{tan2018distill, sun2019patient} attempt to learn a compact model with fewer weights than the original model. Differently, quantization methods shrink the bitwidth of data by replacing float values with finite-bitwidth integers, thus reducing memory footprint and simplifying computational operations. 

Previous approaches \cite{zhang2018lq,li2020additive, habi2020hmq, wang2020differentiable, esser2020learned} generally model the task of quantization as an error minimization problem, viz. $\min$ $ \left \| \mathcal{W} - \rm Q(\mathcal{W}) \right \|,$ where $\mathcal{W}$ is the weight and $\rm Q(\cdot)$ the quantizer. As shown in Figure~\ref{fig:intro}, to minimize the quantization error, complicated quantizers are trained involving mixed-precision across layers or channels, adaptive quantization levels or learnable training policy, e.g. reinforcement learning-based policy.  It has three potential problems:  1) The solution spaces of full-precision and quantized models are quite different (continuous vs discrete), especially for low bitwidths quantized models. The quantized model has very limited capacity to represent its weights. Simply pushing full-precision values to their quantized representations is sub-optimal. Ref.~\cite{qin2020forward} shows that flipping the signs of weights moderately during binarization leads to better performance compared with vanilla binarization; 2) Weights are correlated with each other in each CNN filter, while previous methods quantize each weight independently without exploring such relationship; 3) The quantized weight $\mathcal{W}_q$ has a null gradient $\frac{\partial \mathcal{W}_q}{\partial \mathcal{W}} $ versus their float counterparts. Straight-through estimator (STE)~\cite{bengio2013estimating} and its soft variant DSQ \cite{gong2019differentiable} are heuristics to approximate gradients, yet how to estimate accurate gradients is still an open challenge.

This work poses a question: \textit{Instead of learning complicated quantizers to fit the full-precision parameters, can we generate quantization-friendly representations to fit a basic quantizer?} We propose to decompose quantization as a representation transform ${\rm T}(\cdot)$ and a standard quantizer ${\rm Q}(\cdot)$ such as a uniform quantizer. Before sending to the quantizer, the weight parameters are transformed $\mathcal{W}_{t} = \rm T(\mathcal{W})$. The transform works to bridge the capacity gap between full-precision and low-bitwidth model by suppressing redundant information and retaining useful ones. By carefully defining the transform, we can jointly quantize parameters by exploring the relationships among neurons and obtain informative gradients.

In practice, we use the transform $\rm T(\cdot)$ to map the original weights to the frequency domain, emphasize important frequency components and then map the weights back to the spatial domain.  Powered by Discrete Fourier Transform (DFT), any element in the frequency domain associate all elements of weights in the spatial domain. Therefore, the transform analyses the weights holistically in the frequency domain, rather than treating each parameter separately. By learning a mask over the frequency map of weights, the transform selectively retains informative frequencies, while  masking off trivial frequencies from flowing into a restrictive low-bitwidth model. After then, a standard quantizer (e.g., uniform or logarithmic quantizer) is used to quantize parameters to a prescribed bitwidth. 

In backpropagation, the discretization gradient $\frac{\partial \mathcal{W}_q}{\partial \mathcal{W}} = \frac{\partial \mathcal{W}_q}{\partial \rm T(\mathcal{W})} \frac{\partial \rm T(\mathcal{W})}{\partial \mathcal{W}}$ becomes controllable by the explicitly defined transform. For simplicity, we employ the same quantizer for both  weight and activation, while not transforming activation since activation does not relate to the capacity of the neural network. After training, the weight transform is removed. Deployment becomes as simple as using a standard quantizer. Hence, no special hardware~\cite{krishnamoorthi2018quantizing} that supports advanced quantizers (e.g., mixed-precision) are required.





\begin{figure}
\centering
\includegraphics[scale=0.23]{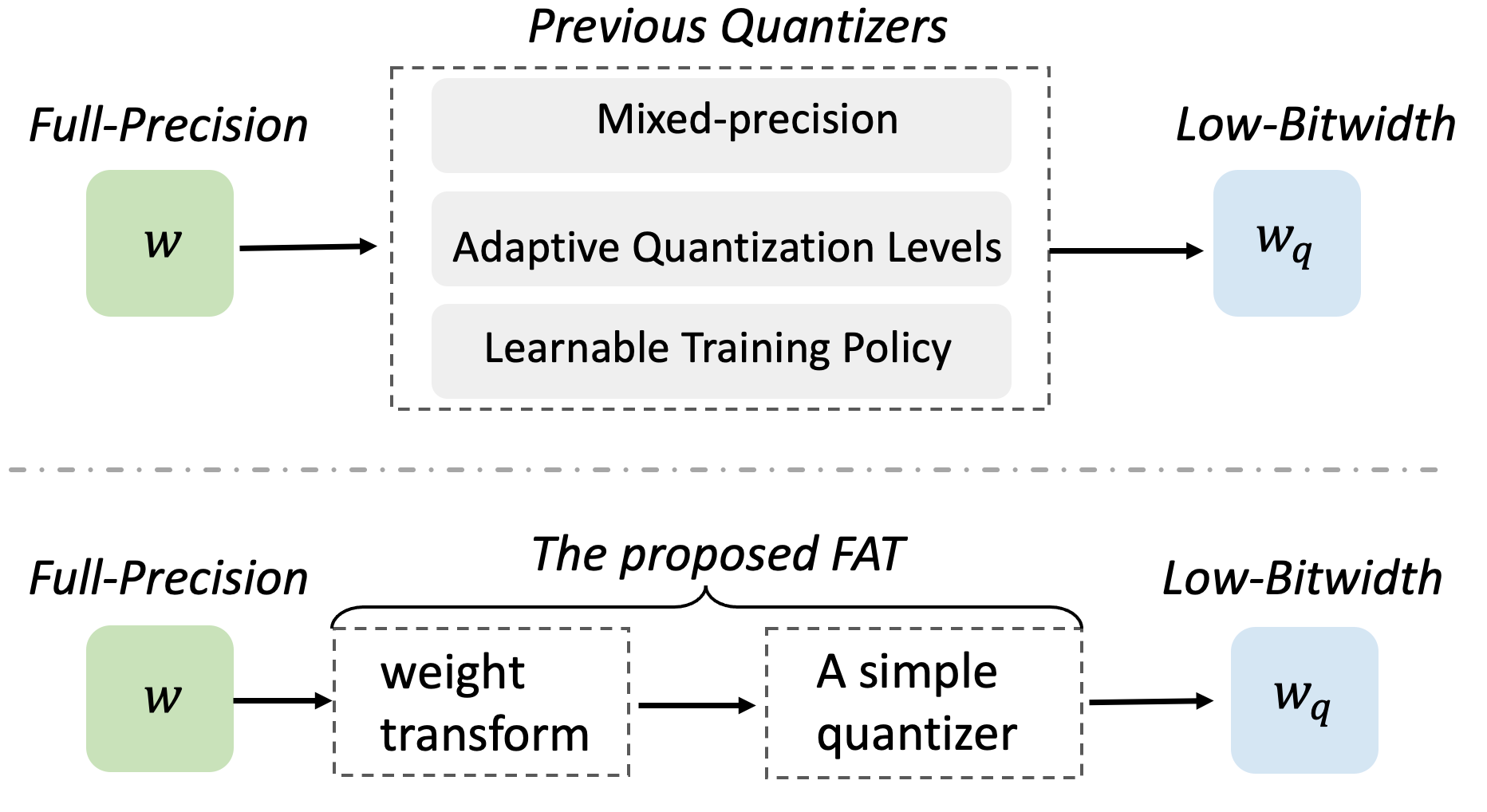}
\caption{Comparison between previous quantization methods (top) with FAT (bottom). The proposed method decouples a quantization problem with a transform and a standard quantizer. The transform suppresses trivial components while keeping informative ones, and learns the quantization gradient by exploiting relationships among neurons. FAT does not involve mixed-precision, adaptive quantization levels or learnable training policy. During inference, only a simple and standard quantizer is required.}
\label{fig:intro}
\end{figure}

We empirically observe that the learned mask prevent most of high-frequency components from flowing to the low-bit model. We further provide theoretical analyses about the properties of our model from a frequency perspective. The proposed Frequency-Aware Transformation (FAT) enables not only quantization error reduction, but also informative discretization gradient by jointly considering multiple frequencies. The main contributions of this work are fourfold:
\begin{enumerate}
	\item To the best of our knowledge, this is the first work that models the task of quantization via a representation transform and a standard quantizer. The proposed transform is an easy drop-in. We combine the transform with uniform/logarithmic quantizer in this paper.
	\item Powered by Fourier Transform, we introduce a novel spectral transform to generate quantization-friendly representations. The discretization gradient is enriched by exploring relationships between neurons, rather than quantizing neurons separately. 
    \item We theoretically analyse properties of the proposed transform. It deepens our understanding of quantization from a frequency-domain viewpoint. 
    \item We outperform state-of-the-art methods on CIFAR-10 and ImageNet datasets with higher computation reduction, pushing both weight and activation to INT3 against performance of full-precision models. The model is also deployed on an ARM-based mobile board.
\end{enumerate}

\section{Related Work}
\label{sec:lit_review}
\subsection{Quantization Methods}
\label{sec:lit_quantization}
\textbf{Post-Training Quantization} Post-training quantization \cite{baskin2018uniq, banner2019post, nagel2019data, nagel2020up} needs no training but a subset of dataset for calibrating the quantization parameters, including the clipping threshold and bias correction. Commonly, the quantization parameters of both weight and activation are decided before inference. Currently, post-training quantization methods cannot achieve satisfying performance  when the allowed bitwidth goes smaller, since the post-training quantization errors are accumulated layer by layer \cite{stock2019and}. Instead, quantization-aware training enables the model to adapt itself to a low-bitwidth setting.





\textbf{Quantization-Aware Training} Quantization-aware training \cite{choi2018pact, zhang2018lq, gong2019differentiable, qin2020forward} generally focuses on minimizing the gaps between the quantized parameters and the corresponding full-precision ones. In~\cite{rastegari2016xnor,zhou2016dorefa,mishra2017wrpn,faraone2018syq}, the scaling factors for quantized parameters make the approximation of full-precision parameters more accurate. In~\cite{wang2018two}, the weights and activations are quantized separately in a two-step strategy. Mixed-precision is widely employed to achieve smaller quantization errors, such as LQ-Net~\cite{zhang2018lq}, DJPQ~\cite{wang2020differentiable} and HMQ~\cite{habi2020hmq}. In HAQ~\cite{wang2019haq}, the training policy is learned by reinforcement learning. Given a bitwidth, adaptive non-uniform quantization intervals~\cite{caccia2020online} can also reduce the quantization errors. 

In addition, the non-differentiable quantization function leads to zero-gradient problem during training. Although STE~\cite{bengio2013estimating} can be employed, the approximation error is large when the bitwidth is low.  DSQ~\cite{gong2019differentiable} uses a series of hyperbolic tangent functions to gradually approach the staircase function. Despite this, the aforementioned methods process quantization of weights independently. Our proposed FAT tackles quantization with an explicit transform. It enables joint quantization from a holistic view during feed-forward, and informative gradient during backpropagation. 


\subsection{Learning in the Frequency Domain}
\label{sec:lit_FFT_application}
Standard convolution in the spatial domain is mathematically equivalent with the cheaper Hadamard product in the frequency domain. Therefore, Fast Fourier Transform (FFT) has been widely used to speed up convolution~\cite{mathieu2013fast,rippel2015spectral,lin2019fast}, and design energy-efficient CNN hardware~\cite{nguyen2016energy,chitsaz2020acceleration}. CNNpack~\cite{wang2016cnnpack} regards convolutional filters as images and then decomposes convolution in the frequency domain for speedup.

Moreover, FFT allows one to extract salient information of feature maps from the frequency domain, which provides additional cues besides visual features. Ref.~\cite{xu2020learning} replaces vanilla downsampling with frequency map of input data, which works as a new form of data pre-processing. In~\cite{NIPS2015_536a76f9}, spectral pooling is proposed, which does pooling in the frequency domain and preserves more information than the regular pooling done in the spatial domain. In~\cite{xiao2020invertible}, the authors propose a novel invertible network to do image rescaling task by embedding lost high-frequency information in the downscaling direction. While most of these methods focus on efficient convolution or processing of data/features maps, our proposed FAT is the first to leverage frequency properties to learn quantization-friendly representations in the weight space, as a bridge to achieve low-bitwidth models.
\section{Frequency-Aware Transformation (FAT)}
\label{sec:method}
\subsection{Preliminary}
\label{sec:preliminary}
We introduce two standard quantizers, uniform quantizer and logarithmic quantizer. Traditionally, if the bitwidth $m$ is fixed without learning for all layers, a quantizer is determined by the quantization levels and clip threshold. A full-precision value $x$ is clipped by a float threshold $\alpha$ and then projected onto the range $[0,1]$. Using unsigned quantization as an example, uniform quantization is formulated as:
\begin{equation}
\label{eq:quantization}
\small
{\rm Q}(x) = \triangle \cdot {\rm round}(\frac{{\rm clip}(x, -\alpha, \alpha)}{\triangle}), \quad \triangle=\frac{1}{2^m-1},
\end{equation}
where clipping mitigates the negative effect of extreme values. For uniform quantization, it has quantization levels $\left \{0, \frac{1}{2^m-1},\frac{2}{2^m-1},\cdots,\frac{2^m-2}{2^m-1}, 1 \right \}$. The interval between quantization levels is fixed to be $\frac{1}{2^m-1}$, therefore the same conventional adders and multipliers can be employed during deployment. For logarithmic quantization, it has quantization levels satisfying powers of two $\left \{0, 2^{-2^m+2},2^{-2^m+3},\cdots,2^{-1}, 1 \right \}$. The full-precision value is mapped to its nearest quantization level to get quantized value. Although logarithmic quantizer involves different multipliers for different quantization levels, they can be obtained with cheap \textit{shift} operations. Signed quantization can be extended straightforwardly by deceasing $m$ by 1 bit and symmetrizing the quantization level.


\subsection{FAT Framework} 
\label{sec:frame}
Compared with existing approaches that focus on how to design quantizers to fit the full-precision weights, our proposed FAT attempts to generate quantization-friendly representations via a spectral transform. Since the capacity of low-bitwidth model is more restrictive, a good representation should make full utilization of each bit. The transform is supposed to keep the salient information while disregarding unimportant cues. To unify the operation for both convolution and fully-connected layers, we reshape a CNN kernel tensor $\mathcal{W}\in \mathbb{R}^{C_{out}\times C_{in}\times k\times k}$ to a $2$-D matrix denoted as $\mathcal{W} \in \mathbb{R}^{C_{out}\times N}$, where $N = k^2C_{in}$.  Each row denotes a filter. Since different filters are computed separately in convolution, we apply 1-D Discrete Fourier Transform (DFT) $\mathcal{F}(\cdot)$ on each filter to obtain frequency map $\mathcal{W}_f$. The process of DFT  is formalized as:
\begin{equation}
\label{eq:pre_map}
\begin{aligned}
\mathcal{W}_f(i,:) &= \mathcal{F}(\mathcal{W}(i,:)), \\
\mathcal{W}_f(i,k) &= \sum_{n=0}^{N-1}\mathcal{W}(i,n)\cdot e^{-j \frac{2\pi}{N}kn},    
\end{aligned}
\end{equation}
where filter index $i=0,\cdots,C_{out}-1$ and neuron index in filter $k=0,\cdots,N-1$. By this  operation, we encode each convolutional filter with $N$ frequency basis functions. DFT generates complex values instead of real, so we compute the spectral norms of weights at each frequency. The spectral norm $\left \|\mathcal{W}_f   \right \| \in R^{C_{out}\times N}$ reflects the energy on each frequency. As shown in Figure~\ref{fig:ss}, compared with weights in the spatial domain, the energy distribution is much sparser in the frequency domain. Regardless of any filters, the energy in low frequencies are strong, while the energy in high frequencies is weak. This fact is generally observed in different layers and different network architectures used in the Experiments section. It inspires us to learn a soft mask that automatically learns importance from the frequency map, suppressing redundant information flow into low-bitwidth models. Then, a simple quantizer can be applied regardless of layers and architectures.

\begin{figure}
\centering
\includegraphics[scale=0.26]{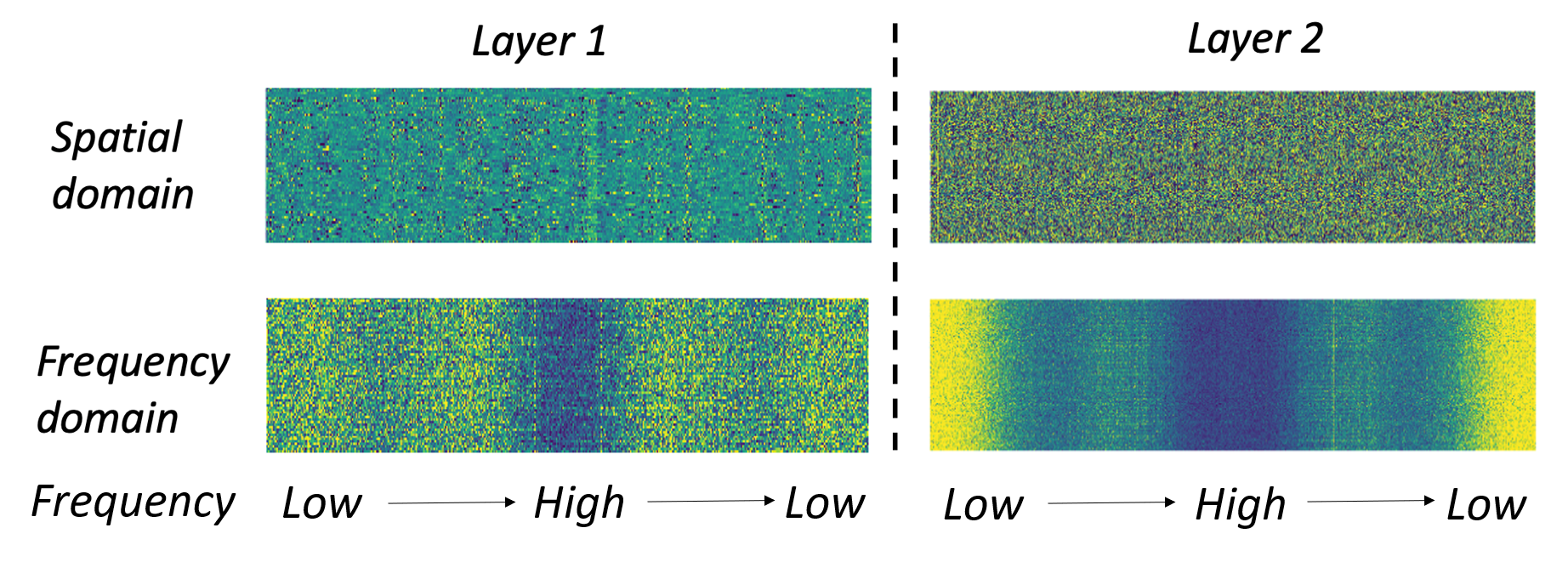}
\caption{Visualization of two flattened weights in spatial domain (top) and frequency domain (bottom). The warmer the color, the higher the value. The density is randomly distributed in the spatial domain, while concentrated on low frequencies in the frequency domain. We then use a soft mask to learn the importance of frequency map of weights in an element-wise manner.}
\label{fig:ss}
\end{figure}

\begin{figure}[t]
\centering
\includegraphics[scale=0.2]{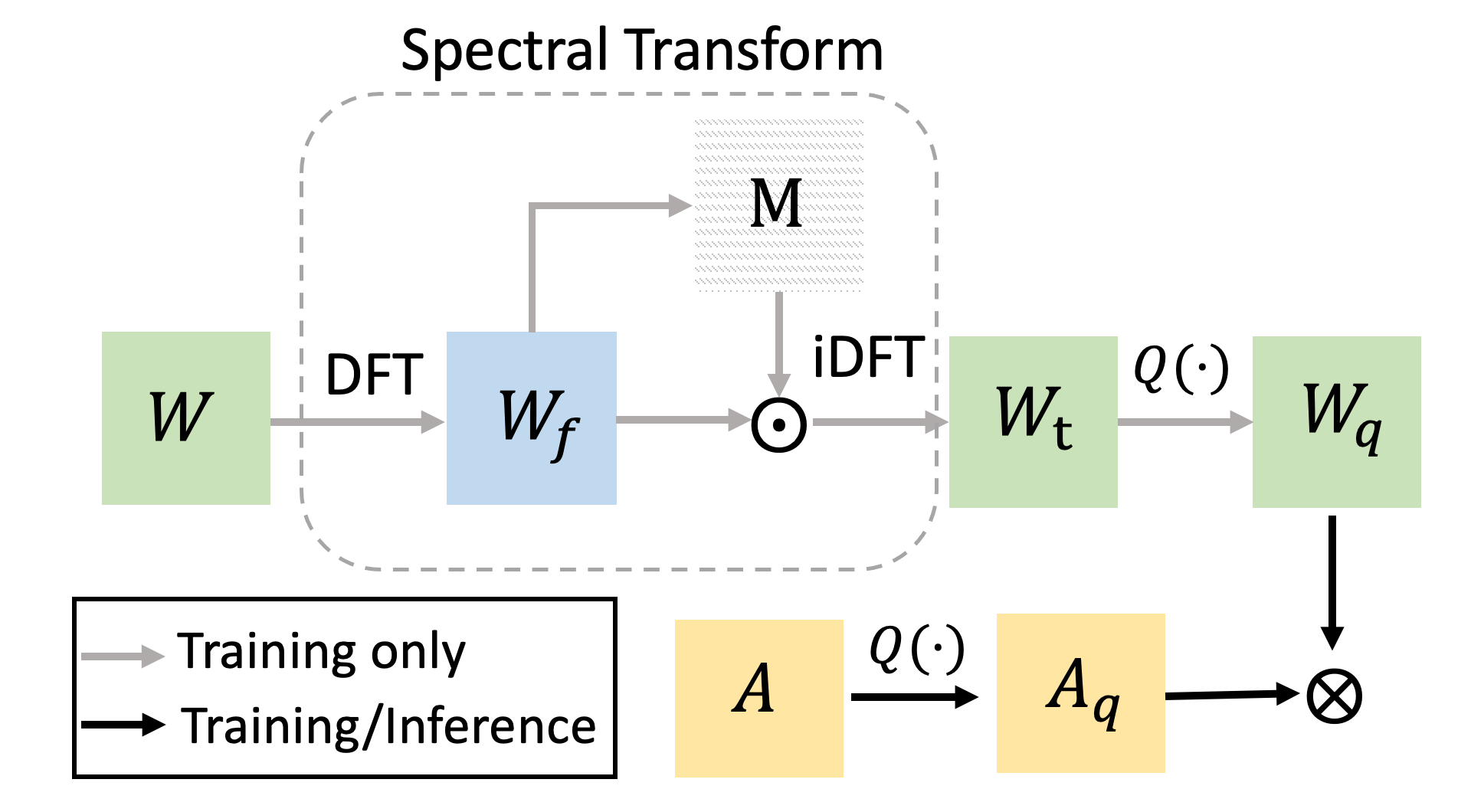}
\caption{Illustration of the proposed quantization process. ``$\mathcal{W}$'' and ``A'' stand for weight and activation. $\rm Q(\cdot)$ is a standard quantizer. We use $\odot$  and $\otimes$ to denote Hadamard product and convolution operation, respectively. }
\label{fig:process}
\end{figure}

The complete workflow is visualized in Figure~\ref{fig:process}. We use a trainable mask $M \in \mathbb{R}^{C_{out} \times N}$ to find and distinguish the quantization-friendly and quantization-useless components in the weights, and the useless components will be softly deactivated with coefficient ranging from $0$ to $1$. This step can be formalized by the following equations:
\begin{equation}
\label{eq:3}
M = {\rm Sigmoid} (W_m^T (\left \|\mathcal{W}_f   \right \|),\\
\end{equation}
\begin{equation}
\label{eq:4}
\hat{\mathcal{W}_f} = M \odot \mathcal{W}_f,
\end{equation}
where $W_m \in \mathbb{R}^{C_{out} \times {C_{out}}}$ linearly maps the power spectral map. Sigmoid function ensures all coefficients learned in the mask are in $[0,1]$. And $\odot$ represents the element-wise multiplication. The learning process of the mask can be viewed as a  self-attention mechanism in the frequency domain. The mask retains useful frequency components and ignores trivial components. Finally, we map the weights back to the spatial domain: 
\begin{equation}
\mathcal{W}_{t}(i,:) = \mathcal{F}^{-1}(\hat{\mathcal{W}_f}(i,:)),
\label{eq:5}
\end{equation}
where $\mathcal{F}^{-1}$ is the inverse Discrete Fourier Transform (iDFT). The transformed weights will be quantized by a standard quantizer. Then, convolutional weights will be reshaped to 4-D for convolutional operation.
\begin{figure}[t]
\centering
\includegraphics[scale=0.35]{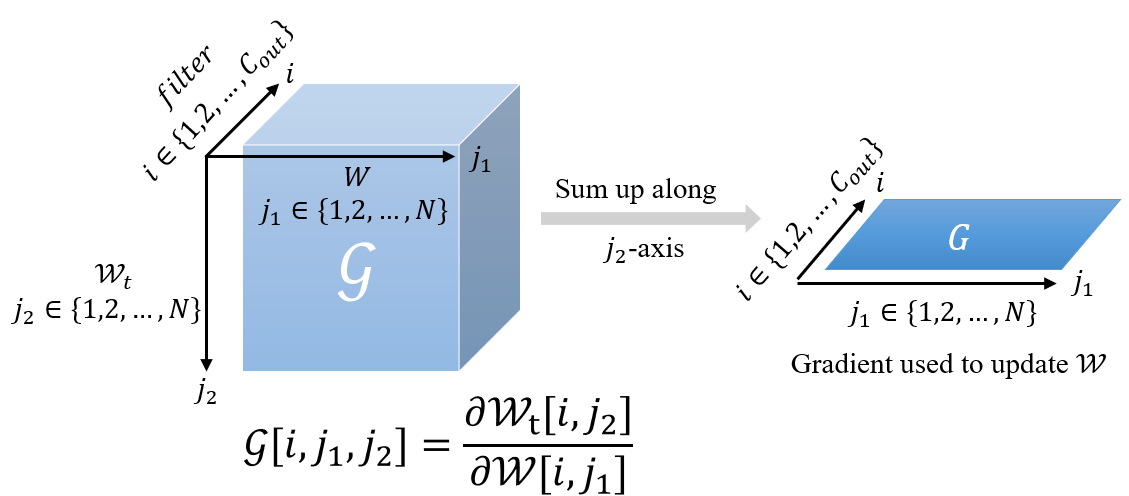}
\caption{Illustration of the process of updating gradient for flattened convolution weights during transform.}
\label{fig:gradient3d}
\end{figure}

\begin{algorithm}[t]
\begin{algorithmic}
\caption{The forward and backward processes of FAT applied on one convolutional layer.}
\label{alg1}
\STATE {\bfseries Input:} weight $\mathcal{W}$ and activation $A$, bitwidth $m$;
\STATE {\bfseries Output:} quantized  output $O$
\STATE {\bfseries Parameters:} soft mask $M$, threshold $\alpha_{\mathcal{W}}$ , $\alpha_{A}$ 
\STATE \textbf{Feed Forward:}
\STATE $\mathcal{W}_f = \mathcal{F}(\mathcal{W})$, map the full-precision weight from spatial domain to frequency domain;
\STATE $\mathcal{W}_{t} = \mathcal{F}^{-1}(\rm M \odot \mathcal{F}(\mathcal{W}))$, where $\rm M$ is generated by Eq.~\ref{eq:3} to adjust the passing proportion in different frequency bases;
\STATE $\mathcal{W}_q= {\rm Q}({{\rm clip}}(\mathcal{W}_{t}, -\alpha_{\mathcal{W}},  \alpha_{\mathcal{W}}))$, where ${\rm Q}(\cdot)$ is a standard quantizer applied on the transformed weight $\mathcal{W}_{t}$;
\STATE $A_q = {\rm Q}({{\rm clip}}(A, -\alpha_{A},  \alpha_{A}))$;
\STATE $O = {\rm Conv}(\mathcal{W}_q, A_q)$;
\STATE \textbf{Backward Propagation:}
\STATE $\frac{\partial \mathcal{L}}{\partial \mathcal{W}_{t}} $ = $\frac{\partial \mathcal{L}}{\partial \mathcal{W}_{q}}$ $\cdot$ $\frac{\partial \mathcal{W}_{q}}{\partial \mathcal{W}_{t}}$ $\approx$ $\frac{\partial \mathcal{L}}{\partial \mathcal{W}_{q}} \cdot \mathbb{I} [|\mathcal{W}_{t}| < \alpha_{\mathcal{W}}]$;
\STATE $\frac{\partial \mathcal{L}}{\partial M} $ = $\frac{\partial \mathcal{L}}{\partial \mathcal{W}_{t}}$ $\cdot$ $\frac{\partial \mathcal{W}_{t}}{\partial M}$, where the soft mask learns different frequency clues jointly to update;
\STATE	$\frac{\partial \mathcal{L}}{\partial \mathcal{W}} $ = $\frac{\partial \mathcal{L}}{\partial \mathcal{W}_{t}} $  $\cdot$ $\frac{\partial \mathcal{W}_{t}}{\partial \mathcal{W}}$, where the discretization function learns different frequency clues jointly to update;
\STATE  $\frac{\partial \mathcal{L}}{\partial  \alpha_{\mathcal{W}}} $ = $\sum{ \frac{\partial \mathcal{L}}{\partial \mathcal{W}_q} \cdot {\rm sign}(\mathcal{W}_{t}) \cdot \mathbb{I} [|\mathcal{W}_{t}| > \alpha_{\mathcal{W}}]}$;
\STATE  $\frac{\partial \mathcal{L}}{\partial  \alpha_{A}} $ = $\sum{\frac{\partial \mathcal{L}}{\partial A_q} \cdot \mathbb{I} [A > \alpha_{A}]}$; \\
// During inference, the transform is removed, the quantized model only uses an uniform/logarithmic quantizer.
\end{algorithmic}
\end{algorithm}

\subsection{Analyses of the Proposed Framework}
We provide theoretical insights showing FAT enables smaller quantization errors and more informative backpropagation gradient via a spectral transform that incorporates structural properties of various frequencies. We also explore the relationship of our approach with various representative schemes. The detailed proof of theorems and derivations of all gradients involved in FAT are detailed in the Appendix.%
\begin{figure*}[t]
\centering
\subfigure[STE]{
\begin{minipage}[t]{0.32\linewidth}
\centering
\includegraphics[width=2.3in, height=2.5in]{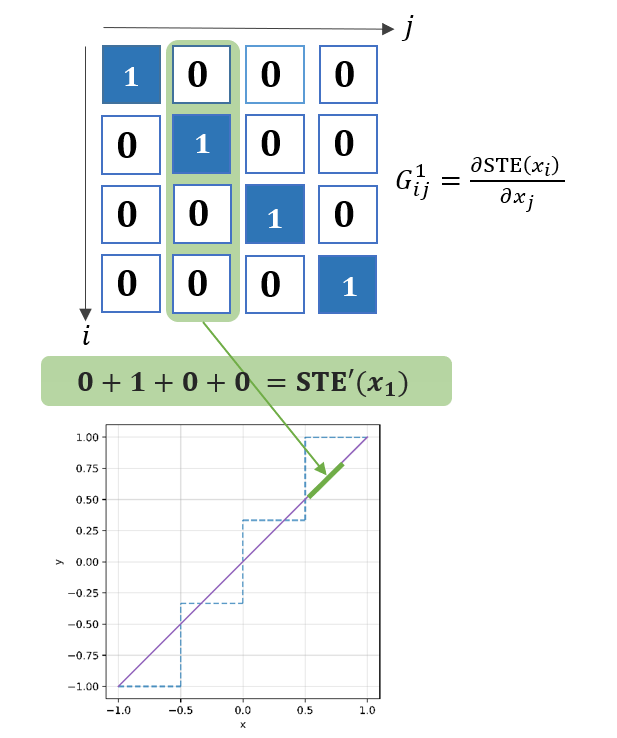}
\end{minipage}%
}%
\subfigure[DSQ]{
\begin{minipage}[t]{0.32\linewidth}
\centering
\includegraphics[width=2.3in, height=2.5in]{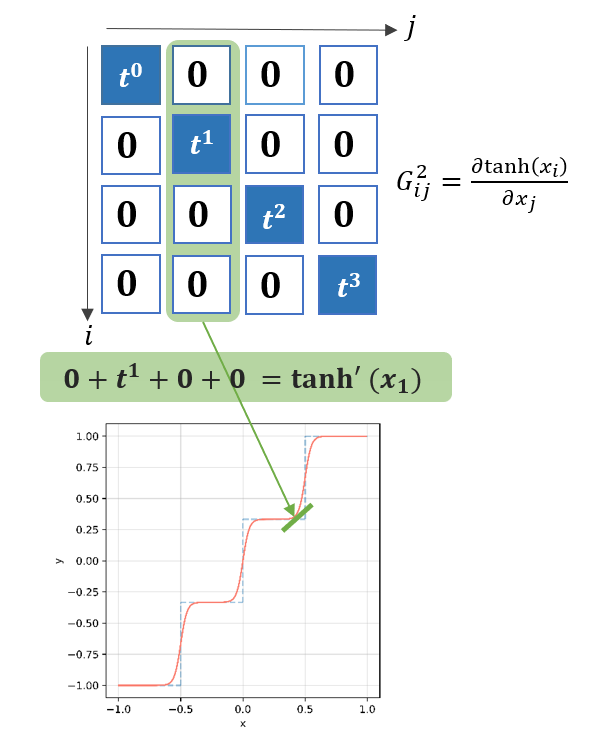}
\end{minipage}%
}%
\subfigure[FAT (Ours)]{
\begin{minipage}[t]{0.32\linewidth}
\centering
\includegraphics[width=2.3in, height=2.5in]{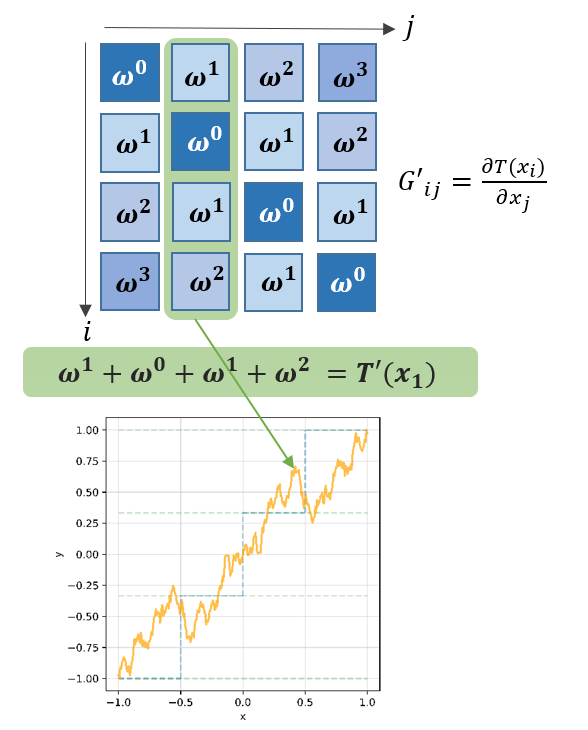}
\end{minipage}
}%
\caption{\textbf{Top:} Comparison of discretization gradient matrix $\frac{\partial \mathcal{W}_q}{\partial \mathcal{W}}$ in STE, DSQ \cite{gong2019differentiable} and ours, given any filter (take the number of neurons $N=4$ as example). \textbf{Bottom:} The designed quantization approximation functions. The derivatives of these functions are the corresponding discretization gradients. For each neuron, the  derivative is equal to the column sum of the top gradient matrix. Instead of quantizing each neuron independently in STE or DSQ (diagonal gradient matrix), our method considers all neurons in quantization and utilizes information of all frequencies.}
\label{fig:gradient_comp}
\end{figure*}

\subsubsection{Quantization Error Reduction}
Quantization error can directly reflect the quality of quantization. Following~\cite{banner2019post}, we employ the expected mean-squared-error (MSE) between the full-precision weights and its quantized version as the metric to evaluate the quantization error. In this section, we theoretically show that the transformed weight $\mathcal{W}_t$ is guaranteed to have a smaller quantization error than the original weight $\mathcal{W}$.  Without loss of generality, weight satisfies zero-mean distribution. As proven in Theorem~\ref{theorem:mse},
\begin{theorem}
\label{theorem:mse}
Assume weight $\mathcal{W}$ satisfies ${\rm{Laplace}(0,b)}$ distribution. Then the following two inequalities hold for both uniform and logarithmic quantization:
\begin{subequations}
\begin{equation}
\label{eq:inequal_1_sm}
\max(|\mathcal{W}_t|) < \max(|\mathcal{W}|),
\end{equation}
\begin{equation}
\label{eq:inequal_2}
E[(\mathcal{W}_t-\rm Q(\mathcal{W}_t))^2] \leq E[(\mathcal{W}-\rm Q(\mathcal{W}))^2].
\end{equation}
\end{subequations}
\end{theorem}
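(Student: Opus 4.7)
The plan is to rewrite the transform as a linear filter in the spatial domain and exploit that the Sigmoid-produced mask has entries strictly in $(0,1)$. By the convolution theorem applied to Equations~\eqref{eq:4}--\eqref{eq:5}, the transformed weight is $\mathcal{W}_t = h \ast \mathcal{W}$, where $h = \mathcal{F}^{-1}(M)$. Since Equation~\eqref{eq:3} puts $M$ through a Sigmoid, we have $0 < M(k) < 1$ strictly at every frequency, so $h$ acts as a non-trivial contraction on every non-zero spectral component, and the two inequalities become, morally, statements about how this contraction interacts with the Laplace model.

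For inequality~\eqref{eq:inequal_1_sm}, I would expand the inverse DFT and apply the triangle inequality: for each neuron index $n$,
\begin{equation*}
|\mathcal{W}_t(i,n)| = \Bigl|\tfrac{1}{N}\sum_{k} M(k)\,\mathcal{W}_f(i,k)\,e^{j2\pi kn/N}\Bigr| \leq \tfrac{1}{N}\sum_{k} M(k)\,|\mathcal{W}_f(i,k)|,
\end{equation*}
which is strictly less than the analogous bound $\tfrac{1}{N}\sum_k |\mathcal{W}_f(i,k)|$ one obtains for $|\mathcal{W}(i,n)|$ since $M(k)<1$ at frequencies with non-zero spectral weight. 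If a non-probabilistic bound is desired, I would alternatively show $\|\mathcal{W}_t\|_2 < \|\mathcal{W}\|_2$ by Parseval and then appeal to $h$ being an $\ell^\infty$-contraction whenever $\sum_n |h(n)| \leq 1$, which follows from $M$ being uniformly bounded away from unity.

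For inequality~\eqref{eq:inequal_2}, I would proceed in two steps. First, Parseval's identity gives $\mathrm{Var}(\mathcal{W}_t) = \tfrac{1}{N}\sum_k M(k)^2 |\mathcal{W}_f(k)|^2 \leq \tfrac{1}{N}\sum_k |\mathcal{W}_f(k)|^2 = \mathrm{Var}(\mathcal{W})$, so the transform reduces the effective Laplace scale from $b$ to some $b_t \leq b$. Second, I would invoke the closed-form expected MSE expressions for Laplace input under uniform and logarithmic quantization as in~\cite{banner2019post}, which decompose into a rounding contribution scaling with $b^2/(2^m-1)^2$ (or its logarithmic analogue) and a clipping-tail contribution monotone in $\alpha/b$. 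Both contributions are non-decreasing in $b$, so substituting $b_t$ in place of $b$ yields the required inequality; optimizing $\alpha$ separately for $\mathcal{W}$ and $\mathcal{W}_t$ preserves the monotonicity.

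The main obstacle is that $\mathcal{W}_t$, being a weighted combination of Laplace variables, is not itself Laplace, so the closed-form Laplace MSE formula applies only approximately. I would resolve this either by (i) adopting the standard modeling assumption that quantization MSE is governed by the second-order statistics of $\mathcal{W}_t$, so that the Laplace-scale substitution is justified to leading order, or (ii) using a direct majorization of the quantization error integrals for symmetric, unimodal densities with strictly smaller scale, bypassing the Laplace formula entirely. The strictness in~\eqref{eq:inequal_1_sm} follows from Sigmoid outputs being bounded away from $1$ whenever the weight tensor carries non-zero spectral energy, which is generic in practice.
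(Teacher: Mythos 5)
Your argument for inequality~\eqref{eq:inequal_1_sm} has a genuine gap. The triangle-inequality step gives an \emph{upper bound} on $|\mathcal{W}_t(i,n)|$, namely $\tfrac{1}{N}\sum_k M(k)|\mathcal{W}_f(i,k)|$, and you compare it to $\tfrac{1}{N}\sum_k |\mathcal{W}_f(i,k)|$, which is itself only an upper bound on $|\mathcal{W}(i,n)|$, not its value. Showing that one upper bound is smaller than another says nothing about the relative size of $\max|\mathcal{W}_t|$ and $\max|\mathcal{W}|$. The fallback you propose is also unsound: $\sum_n |h(n)| \leq 1$ with $h=\mathcal{F}^{-1}(M)$ does \emph{not} follow from $M$ being entrywise below $1$. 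For $N=4$ and a mask approaching $(1,1,0,1)$ one gets $h=\tfrac14(3,1,-1,1)$, so $\|h\|_1 \to \tfrac32$, and the input $\mathcal{W}=(1,1,-1,1)$ is mapped to a vector with $\ell^\infty$ norm $\tfrac32 > 1$. So an entrywise-subunit mask is an $\ell^2$ contraction (by Parseval) but not in general an $\ell^\infty$ contraction, and inequality~\eqref{eq:inequal_1_sm} cannot be derived from $0<M<1$ alone. To be fair, the paper's own justification of this step is essentially the same non sequitur (``mask entries lie in $[0,1]$, hence the weights are tightened toward zero''), but your write-up presents two specific arguments and both fail.

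For inequality~\eqref{eq:inequal_2} your route is genuinely different from the paper's, and the comparison is instructive. You contract the second moment via Parseval, reinterpret this as a reduced Laplace scale $b_t \leq b$, and invoke monotonicity in $b$ of the closed-form MSE. The paper instead keeps $b$ fixed and makes the \emph{amplitude} $a=\max(|\mathcal{W}|)$ the controlling variable: it rederives the clipping-noise integral truncated at $a$ rather than $+\infty$, obtains $f_u(\alpha,a)$ and $f_{\log}(\alpha,a)$, numerically optimizes $\alpha^*$ for each $a$, and checks graphically that the resulting $g(a)$ is increasing, so that~\eqref{eq:inequal_2} is deduced \emph{from}~\eqref{eq:inequal_1_sm}. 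Your version buys a first step that is actually rigorous (the $\ell^2$ contraction holds unconditionally) and avoids depending on the problematic amplitude claim, at the price of the modeling leap that the MSE of the non-Laplace variable $\mathcal{W}_t$ is governed by its second moment through the Laplace formula; the paper's version buys an explicit amplitude-dependent error expression at the price of resting on the unestablished $\ell^\infty$ bound and on a numerical rather than analytic monotonicity argument. If you repair or explicitly assume a statement of the form~\eqref{eq:inequal_1_sm} (or restrict to masks for which $\|\mathcal{F}^{-1}(M)\|_1 \leq 1$), your variance-based path for~\eqref{eq:inequal_2} is a defensible alternative to the paper's.
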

We show that the expected MSE between any given full-precision weights and its quantized version can be generally approximated as the function of the clipping threshold $\alpha$ and the amplitude $a=\max(|\mathcal{W}_t|)$. For uniform quantization and logarithmic quantization, the quantization error can be analytically written as: 
\begin{equation}
\label{eq:mse_u}
\begin{aligned}
E[(\mathcal{W}_t-{\rm Q}_u(\mathcal{W}_t))^2] &= f_{u}(\alpha,a) = f_{clip} + f_{q\_u} \\
f_{q\_u} &=\frac{\alpha^2}{3\cdot 2^{2m}}
\end{aligned}
\end{equation}

\begin{equation}
\label{eq:mse_log}
\begin{aligned}
E[(\mathcal{W}_t-{\rm Q}_{\log}(\mathcal{W}_t))^2] &= f_{log}(\alpha,a) = f_{clip} + f_{q\_log} \\
f_{q\_log} &= \frac{\alpha^2}{84}\cdot(1+3\cdot 2^{-3\cdot 2^{m-1}+1})
\end{aligned}
\end{equation}

\begin{equation}
\begin{aligned}
f_{clip} &= e^{-\frac{a}{b}}\cdot [2\alpha a - (b+a)^2 - (b-\alpha)^2] b^2 \cdot e^{-\frac{\alpha}{b}},
\end{aligned}
\end{equation}

where $m$ is the bitwidth, $\rm Q_u(\cdot)$ and $\rm Q_{\log}(\cdot)$ represent uniform and logarithmic quantization, respectively.

From  Eq.~\ref{eq:mse_u} and \ref{eq:mse_log}, the quantization errors are formed by two terms, namely, quantization noise and clipping noise. The clipping noise term is approximately the same for both $\rm Q_u(\cdot)$ and $\rm Q_{log}(\cdot)$. In the proposed FAT, the mask helps tighten the weights $\mathcal{W}$ towards zero. Alternatively speaking, applying the mask on $\mathcal{W}$ leads to the result that $\max(|\mathcal{W}_t|)) \leq \max(|\mathcal{W}|)$. Since the data range of weights shrinks after transform, the quantization resolution increases for all transformed weights. On the other hand, the informative components of full-precision weights are kept by passing important frequencies in the mask. The detailed proof is available in the Appendix. We also show that simply adjusting the standard deviation to tighten the weights does not work, since important components are not guaranteed to pass to the quantized model this way.



\subsubsection{Informative Discretization gradient}
Standard quantization is not differentiable and therefore does not readily allow backpropagation. In order to learn a good quantizer, we should carefully design the discretization gradient $\frac{\partial \mathcal{W}_q}{\partial \mathcal{W}}$. The gradient should not only be computable, but also retain information during quantization. Before analysing our discretization gradient and make comparisons with former approaches, we show how to compute the gradient of transform. As shown in Figure~\ref{fig:gradient3d}, given $i$-th vectorized filter $\mathcal{W}(i, :) \in \mathbb{R}^{N}$ and its transformed version $\mathcal{W}_{t}(i, :) \in \mathbb{R}^{N}$, the gradient $\frac{\partial \mathcal{W}_t(i, :)}{\partial \mathcal{W}(i, :)}$ for filter $i$ is a matrix $\in \mathbb{R}^{N\times N}$. Given $j_1$-th neuron in the $i$-th filter, the column in gradient tensor $\mathcal{G}[i,j_1,:]=[\frac{\partial \mathcal{W}_t(i, 1)}{\partial \mathcal{W}(i, j_1)}, \frac{\partial \mathcal{W}_t(i, 2)}{\partial \mathcal{W}(i, j_1)}, \cdots, \frac{\partial \mathcal{W}_t(i, N)}{\partial \mathcal{W}(i, j_1)}]$ reflects the effects of  all the transformed neurons on the original neuron $j_1$. Therefore, this whole gradient tensor $\mathcal{G}$ is summed along the $j_2$-axis (column) to obtain the same shape as the original weight, which is used for weight updating during backpropagation.

 We visualize the gradient matrix and corresponding approximation function in Figure~\ref{fig:gradient_comp}. STE~\cite{bengio2013estimating} is a rough approximation for discretization. It defines $\frac{\partial \mathcal{W}_q}{\partial \mathcal{W}} = \mathbb{I} [-1 < \frac{\mathcal{W}}{\alpha} < 1]$. It assigns the gradients of all the values within clip threshold as 1, and outliers as 0. The antiderivative function $f(\cdot)$ of the aforementioned gradient is $f(x) = {\rm clip}(x, -1, 1)$. DSQ \cite{gong2019differentiable} approximates the gradient by a set of hyperbolic tangent functions $f(x) = \frac{1}{Z} \tanh (kx)$, where $Z$, $k$ are normalization factor and handcrafted coefficient, respectively. The discretization gradient is the derivative of these hyperbolic tangent functions. Note the functions used in DSQ is different from ours from two sides: 1) DSQ just modifies the gradient during backward pass, but does not affect quantization results during forward pass, 2) DSQ treats each neuron separately. 
 
 With the bridge of our proposed transform, the discretization gradient becomes $\frac{\partial \mathcal{W}_q}{\partial \mathcal{W}} = \frac{\partial \mathcal{W}_q}{\partial \mathcal{W}_t} \frac{\partial \mathcal{W}_t}{\partial \mathcal{W}} $. The first term $\frac{\partial \mathcal{W}_q}{\partial \mathcal{W}_t}$ is approximated with STE, and we can utilize the transform $\rm T(\cdot)$ to adjust the second term during training. Given a convolutional filter with index $i$, the gradient of the $k_1$ transformed neuron to $k_2$ original neuron $\frac{\partial \mathcal{W}_{t}(i,k_1)}{\partial \mathcal{W}(i,k_2)}$ during backpropagation considers clues from all frequency bases,
\begin{small}
\begin{equation}
\label{eq:grad_wde_w}
\frac{\partial \mathcal{W}_{t}(i,k_1)}{\partial \mathcal{W}(i,k_2)} = \frac{1}{N}\sum_{n=0}^{N-1} M(i,n) \cdot \cos{\left( \frac{2\pi (k_1 - k_2) n}{N}\right)},%
\end{equation}%
\end{small}%
where $i=0,\cdots,C_{out}-1$ and $k_1,k_2=0,\cdots,N-1$. The soft mask $M$ in our transform is used to learn the importance of weights on different frequency bases. By denoting $\cos{\left( \frac{2\pi i n}{N}\right)} = \omega^{i}$, we can see that our gradient matrix is a non-diagonal matrix, which shows that our model considers cross-neuron dependencies during training.

If the mask $M$ is an all-ones matrix, all frequencies are allowed to pass in 100\%, then the transform $\rm T(\cdot) = \mathcal{F}^{-1}\circ  M  \odot\mathcal{F} (\cdot)$ becomes an identity map, $\mathcal{W} \rightarrow \mathcal{W}$. In this case, the gradient matrix in Eq.\ref{eq:grad_wde_w} degenerates to an identity matrix and the discretization gradient degenerates to the STE.



\begin{table*}[th]
\scriptsize
\begin{center}
\setlength{\tabcolsep}{12mm} 
\caption{Comparison on ImageNet dataset with different bitwidths for weight (W) and activation (A). We use ``*'' and ``m'' to mark our implementation results and mixed-precision, respectively. Acc@1 and  Acc@5 denote top-1 and top-5 accuracy in percentage.}
\label{table1}
\begin{tabular}{l|l|lll}
\toprule
Architecture & Methods & W/A  & Acc@1 & Acc@5 \\
\midrule
\multirow{12}{*}{ResNet-18}&Full-precision & 32/32 & 69.6 & 89.0\\
~&RQ &8/8&  70.0 & 89.4\\
~&LSQ &8/8& 71.1 & 90.1 \\
~&UNIQ &4/8& 67.0& - \\
~&DJPQ & 4/8 (m) & 69.3 & - \\
~&PACT &5/5 &69.8 &89.3\\
~&LQ-Net & 4/4 & 69.3 & 88.8 \\
~&DSQ  & 4/4 & 69.5 & -\\
~&APoT$^*$ &4/4 &69.9 &89.3\\
\cdashline{2-5}
~&FAT(Ours) &5/5 & \textbf{70.8} & \textbf{89.7 }\\
~&FAT(Ours) &4/4 &\textbf{70.5 }& \textbf{89.5} \\
~&FAT(Ours) &3/3 &\textbf{69.0} & \textbf{88.6} \\
\hline
\multirow{9}{*}{ResNet-34}&Full-precision & 32/32 & 73.7 & 91.3 \\ 
~& LSQ & 8/8 & 74.1 & 91.1 \\
~& BCGD & 4/4 &73.4& 91.4 \\
~& QIL & 4/4 &73.7& - \\
~&DSQ  & 4/4 & 72.8 & -\\
~&APoT$^*$ & 4/4  &73.0 & 91.0 \\
\cdashline{2-5}
~&FAT(Ours) &5/5 & \textbf{74.6} & \textbf{91.8 }\\
~&FAT(Ours) & 4/4  & \textbf{74.1}  & \textbf{91.8} \\
~&FAT(Ours) &3/3 & \textbf{73.2} & \textbf{91.2} \\
\hline
\multirow{10}{*}{MobileNet-V2}&Full-precision & 32/32 &  71.7 & 90.4 \\
~&HAQ &4/32 (m) & 71.4& 90.2 \\
~&HMQ  &4/32 (m) & 70.9 & - \\
~&DQ & 4/8 (m) & 68.8 & - \\
~&DJPQ & 4/8 (m) & 69.0 & - \\
~&PACT &4/4  &61.4&  -\\
~&DSQ  &4/4 & 64.8 & -\\
\cdashline{2-5}
~&FAT(Ours) &5/5 & \textbf{69.6} & \textbf{89.2} \\
~&FAT(Ours) &4/4 &\textbf{69.2}& \textbf{88.9} \\
~&FAT(Ours) &3/3 &\textbf{62.8} & \textbf{84.9} \\
\bottomrule
\end{tabular}
\end{center}
\end{table*}

\subsection{Complexity Analysis}
\label{sec:complexity}
The proposed FAT is summarized in Algorithm \ref{alg1}. Given a convolution layer with a $4$-D weight tensor $\mathcal{W} \in \mathbb{R}^{C_{out} \times C_{in} \times k \times k}$, and an input of size $(C_{in}, H, W)$, where $H$ and $W$ are height and width of the input, respectively. The number of multiply-accumulate (MAC) operations in the common convolution is $N_{mac} = H\cdot W \cdot C_{out} \cdot C_{in} \cdot k^2$. Denoting $N=C_{in} \cdot k^2$, the complexity of $\mathcal{F}(\mathcal{W}(i,:))$ and $\mathcal{F}^{-1}(\hat{\mathcal{W}}_f(i,:))$ are both $C_{out}\cdot N\cdot \log (N)$. Regardless of the data batch size, the number of MACs in magnitude computation, element-wise product and fully-connected mapping sum up to $4\cdot C_{out} \cdot N + C_{out}^2\cdot N$. The extra MACs introduced by the transform is $\Delta N_{mac} = 2\cdot C_{out}\cdot N\cdot \log (N) + 4\cdot C_{out} \cdot N + C_{out}^2\cdot N$. For example, assuming $\mathcal{W} \in \mathbb{R}^{256 \times 3 \times 3 \times 3}$ and the input size is $(3, 224, 224)$, then $\frac{\Delta N_{mac}}{N_{mac}} = \frac{2\cdot\log(3\cdot3\cdot3)+4+256}{224\cdot 224} \approx 0.0054$. Hence, the transform introduces negligible training cost.



\begin{table}[ht]
\scriptsize
    \begin{center}
        \setlength{\tabcolsep}{3.3mm} 
		\caption{Comparison on CIFAR-10 dataset with different bitwidths for weight (W) and activation (A).}
        \label{table2}
        \small{
		\begin{tabular}{l|l|ll}
			\toprule
			Architecture & Methods & W/A  & Accuracy \\
			\midrule
			\multirow{7}{*}{VGG-Small} & Full-precision &  32/32& 93.1 \\
            ~&RQ  & 8/8   & 93.3 \\
            ~&DJPQ  & 4/8 (m) &  91.5 \\
            ~&RQ  &4/4    & 92.0 \\
            ~&WAGE  & 2/8  & 93.2	\\	
            \cdashline{2-4}
            ~&FAT(Ours) & 4/4  &\textbf{94.4}\\
            ~&FAT(Ours)  & 3/3  & \textbf{94.3}	\\
            \hline
			\multirow{6}{*}{ResNet-20} & 
			Full-precision & 32/32 & 91.6 \\ 
            ~&DSQ  & 1/32  & 90.2 \\
            ~&PACT & 4/4 & 90.5 \\
            ~&APoT & 4/4  & 92.3 \\
            \cdashline{2-4}
			~&FAT(Ours)  & 4/4  & \textbf{93.2}	\\	
			~&FAT(Ours)  & 3/3  & \textbf{92.8}	\\	
	        \hline
			\multirow{5}{*}{ResNet-56} &
			Full-precision & 32/32 &  93.2 \\
			~&PACT & 2/32 & 92.9 \\
			~&APoT & 4/4 & 94.0 \\
            \cdashline{2-4}
			~&FAT(Ours) & 4/4 & \textbf{94.6}\\
			~&FAT(Ours)  & 3/3  & \textbf{94.3}	\\
            \bottomrule
		\end{tabular}
		}   
	\end{center}
	\vspace{-3mm}
\end{table}

\section{Experiments}
\label{sec:exp}
We evaluate the effectiveness of the proposed FAT on two commonly used datasets, CIFAR-10 \cite{krizhevsky2009learning} and ImageNet-ILSVRC2012 \cite{russakovsky2015imagenet}. CIFAR-10 is an image classification dataset with 10 classes. ImageNet is a large dataset with 1.3M training images and 50k validation images. We adopt standard training-test data split for both datasets. 

VGG-small \cite{zhang2018lq}, ResNet-20 and ResNet-56 \cite{he2016deep} are used on  CIFAR-10 dataset. ResNet-18, ResNet-34 and MobileNetV2 \cite{sandler2018mobilenetv2} are used on  ImageNet dataset. 

The proposed FAT is built on Pytorch framework.  We compare FAT with state-of-the-art approaches, including WAGE \cite{wu2018training}, LQ-Net \cite{zhang2018lq}, PACT\cite{choi2018pact}, RQ \cite{louizos2018relaxed}, UNIQ \cite{baskin2018uniq}, DQ \cite{tung2018deep}, BCGD \cite{baskin2018nice} \cite{tung2018deep}, DSQ \cite{gong2019differentiable}, QIL \cite{jung2019learning}, HAQ \cite{wang2019haq},  APoT \cite{li2020additive}, HMQ \cite{habi2020hmq} DJPQ \cite{wang2020differentiable}, LSQ \cite{esser2020learned}.

We evaluate the model by trade-off among accuracy, model size and bit-operation (BOP). BOP is a general metric that considers both the bitwidth and the number of multiply-accumulate (MAC) operations \cite{wang2020differentiable}. The formula for bit-operation is ${\rm BOP}=m_w m_a  {\rm MAC}$, where $m_w$ and $m_a$ are bitwidths of weight and activation, respectively. A smaller BOP means lighter computation. We report the proposed transform on a uniform quantizer. The performance on logarithmic quantizer, a brief categorization of state-of-the-art methods and  training details are elaborated in the Appendix.

\subsection{Experimental Results}
As shown in Tables~\ref{table1} \&~\ref{table2}, FAT surpasses previous state-of-the-art methods in accuracy without using a complicated quantizer or training tricks, such as learnable quantization stepsize in LSQ, reinforcement learning-based quantization policy in HAQ and arbitrary-bit precision in HMQ and DJPQ, etc. These methods attempt to learn powerful non-uniform quantizers to fit the distribution of original full-precision data, thereby decreasing quantization error. Compared with the approaches above, the proposed FAT achieves state-of-the-art performance without bells and whistles, trading off accuracy and bitwidth. Instead of using high bitwidth like 8-bit, we show that our method enables the commonly used networks like ResNet, VGG-Small and MobileNet to have acceptable performance in 3-bit or 4-bit setting.

Tables~\ref{table1} \&~\ref{table2}  show  the power of using representation transform before quantization. By viewing the full-precision weights as images and then mapping them to another space where unimportant frequency bases are deactivated, we are capable of using a simple uniform/logarithmic quantizer to achieve competitive performance. It indicates the importance of bridging the full-precision weight to a quantization-friendly representation before quantization, especially for low bitwidth setting like 3-bit integer quantization. Our results even outperform full-precision model, since quantization has regularization effect during the training process.

\subsection{Ablation Study}
\subsubsection{Hardware Performance}
Since the weights after quantization-aware training are fixed, the transform is removed during inference. During inference, the  FAT is as light as using a uniform/logarithmic quantizer on activation. FAT does not need to store extra parameters in quantizers, and employs unified quantization scheme for all layers. Hence, the proposed FAT reduces the computation power compared with most previous methods. As shown in Table~\ref{table3}, we compare the model size and bit-operation among different quantization methods. When quantizing both weight and activation to 4 bit, our method achieve 7.7$\times$, 7.9$\times$, 6.7$\times$ model size compression and 54.9$\times$, 58.5$\times$, 45.7$\times$ bit-operation reduction against full-precision ResNet-18, ResNet-34 and MobileNetV2, respectively. 

In addition, employing a standard quantizer for all layers is hardware-friendly. For instance, if using adaptive quantization levels or mixed-precision, different quantizers need to be adopted per layer and/or channel, which greatly increases the difficulty of hardware deployment on, e.g., ARM CPU, FPGAs, etc.

\subsubsection{Suppressed Frequencies in Quantization}
We wonder what frequencies the quantized model prefers to keep or discard. In Figure~\ref{fig:mask}, we visualize the frequency maps of weights and corresponding learned masks in 4 layers. The visualization demonstrates that the high frequencies not only have weak spectral density in the frequency map, but are also suppressed from full-precision model to quantized model. In traditional image denoising, noises usually have weak spectral density compared with visual feature. By removing the frequencies with weak density, we could reduce the noise in the image. Here we learn the frequencies' importance in the weight space, whereby the transform has similar effect as denoising, i.e., curtail redundant weights flowing to a capacity-limited low-bitwidth model. It also indicates that low frequency components in neurons are important for quantization.

\setlength{\tabcolsep}{2mm} 
\begin{table}[t]
\scriptsize
	\begin{center}
		\caption{Hardware performance in terms of model size and bit-operation on ImageNet dataset, where ``C.R.'' denotes corresponding compression rate. }
		\label{table3}
        \begin{tabular}{l|lllll}
			\toprule
			Methods & bitwidth  &  Size (MB) & C.R. & BOPs (G)  & C.R. \\
			\midrule
			\textbf{ResNet-18} \\
			F.P. & 32/32 & 46.8 & 1x & 1863.7& 1x \\
		    APoT &4/4& 6.3 & 7.4x     & 36.5 & 51.0x\\
		    DSQ &4/4 & 6.3 & 7.4x     & 36.5 &  51.0 x\\
			FAT(Ours) &4/4 &\textbf{6.1} & \textbf{7.7x} & \textbf{33.9}& \textbf{54.9x }\\
			FAT(Ours) &3/3 &\textbf{4.7} &\textbf{10.0x} & \textbf{21.5}& \textbf{86.7x }\\
			\hline
			\textbf{ResNet-34} \\
			F.P. & 32/32 & 87.2 &1x & 3759.4 & 1x \\
			DSQ &4/4&   11.2  & 7.8x & 69.7  & 53x\\
			FAT(Ours) &4/4& \textbf{11.1}& \textbf{7.9x} & \textbf{64.3} & \textbf{58.5x}\\
			FAT(Ours) &3/3& \textbf{8.6} & \textbf{10.1x }&  \textbf{38.6} & \textbf{97.4x}\\
			\hline
			\textbf{MobileNetV2} \\
			F.P. & 32/32 & 14.1 &1x & 337.9& 1x \\
            DQ & 4/8 (m) & 2.3  & 6.1x  & 19.6 & 17.2x \\
            DSQ &4/4 & 2.3 & 6.1x           & 13.2 &25.6x  \\
            APoT &4/4 & 2.3 & 6.1x          & 13.2 &25.6x\\
			FAT(Ours) &4/4 &\textbf{2.1} & \textbf{6.7x} & \textbf{7.4}& \textbf{45.7x}\\
			FAT(Ours) &3/3 &\textbf{2.1} & \textbf{6.7x} &\textbf{ 5.4}& \textbf{62.6x} \\
			\bottomrule
		\end{tabular}
	\end{center}
	\vspace{-0.5cm}
\end{table}

\begin{table}[t]
	\begin{center}
		\caption{Time cost of different quantizers in a 4-bit ResNet-18 on an ARM mobile board, Jetson AGX Xavier. ``LogQ'' and  ``UniQ'' denote logarithmic and uniform quantizers used in our method.}
		\label{table4}
		\begin{tabular}{llll}
		\toprule
		 LQ-Net & APoT   & LogQ (Ours) & UniQ (Ours) \\
		 \hline
		  929ms & 857ms   &  800ms & \textbf{398ms} \\
		 \bottomrule
		\end{tabular}
	\end{center}
	\vspace{-0.5cm}
\end{table}
\setlength{\tabcolsep}{1.2pt} 	

\subsubsection{Quantization Shift via Transform}
In Figure~\ref{fig:shift}, we show how much full-precision weight has shifted quantization after transform, i.e., $\rm Q(\mathcal{W}) \neq \rm Q(\rm T(\mathcal{W}))$, where $\mathcal{W}$ is the pretrained full-precision weight, and $\rm T(\cdot)$ is learned transform. From Figure~\ref{fig:shift}, many neurons have shifted quantization results after transform. Weight shifting boosts the information gain during quantization~\cite{qin2020forward}. It verifies that we should not simply assign quantized weights near their full-precision counterparts, due to the difference of solution spaces between full-precision and low-bitwidth models.

\begin{figure}[t]
\centering
\includegraphics[scale=0.45]{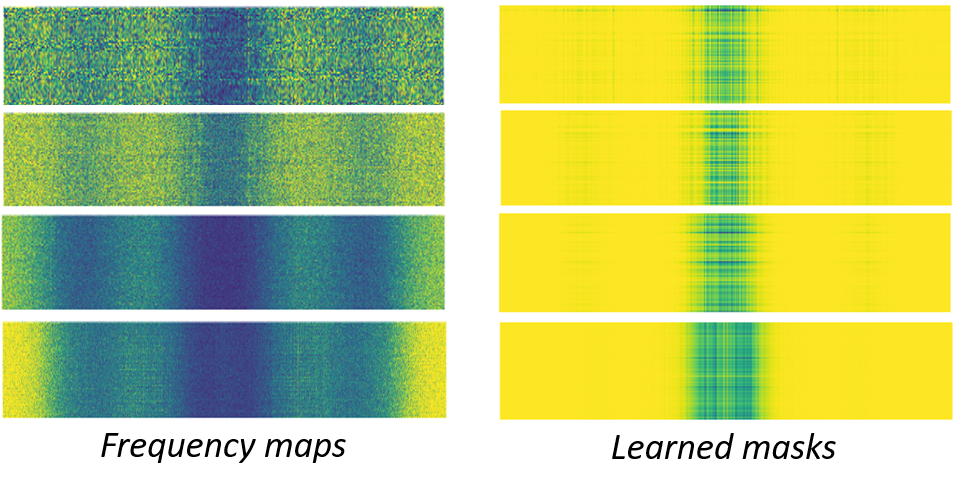}
\caption{Visualization of the frequency maps of convolutional weights and corresponding learned masks in 4 layers of ResNet-34. Warm color means strong spectral density in frequency map and importance learned by mask, respectively. High-frequency part (center) tends to be removed after FAT.}
\label{fig:mask}
\end{figure}

\begin{figure}[t]
\centering
\includegraphics[scale=0.2]{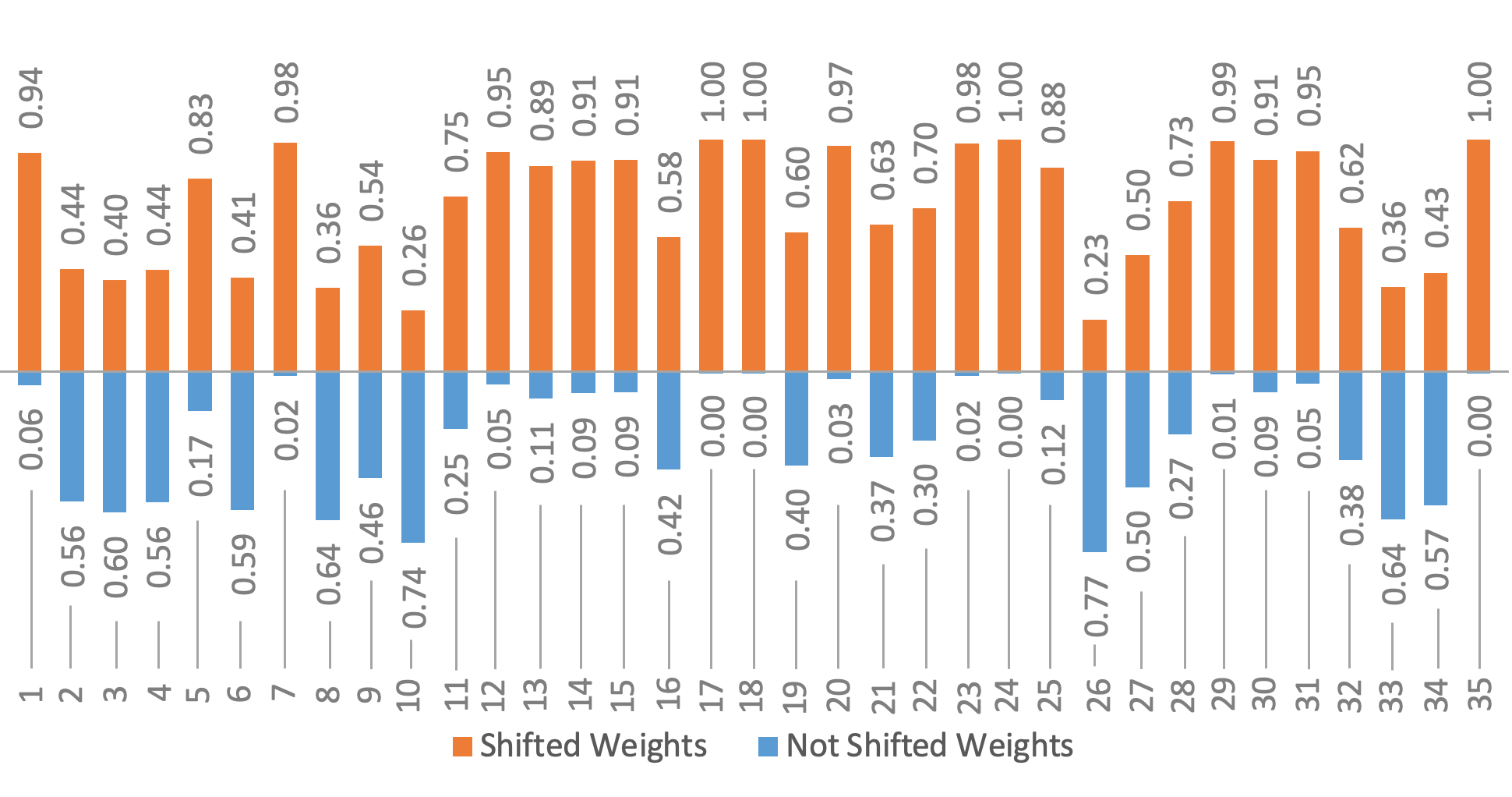}
\caption{Proportions of shifted weights after FAT in various layers in ResNet-34.}
\label{fig:shift}
\end{figure}

\subsubsection{Speed Comparison on Board}
We deploy our method on a mobile device and test the real inference speed of different quantizers. The device we use is Jetson AGX Xavier, a ARM v8.2 64-bit CPU-based architecture. The inference time is reported on the ImageNet dataset with a single thread. From Table~\ref{table4}, we observe that the inference time cost in LQ-Net is relatively large, since different adders and multiplies are employed for adaptive quantization levels. APoT and logarithmic quantizers improve the speed because various quantization levels can be achieved by cheap shift operation. Uniform quantization enjoys high inference efficiency with the smallest time cost, since it employs unified quantizer for all layers during quantization.

\section{Conclusions and Future Work}
This work proposes a novel Frequency-Aware Transformation (FAT) model for low-bitwidth quantization of convolutional neural networks. For the first time, we explicitly define quantization as a trained representation transform in the solution space. The soft mask in the spectral transform learns the importance of weights in each frequency bin. Backed by theoretical analyses, we show the proposed transform enables quantization error reduction and frequency-informative discretization gradient for efficient backpropagation. In addition, the transform introduces negligible extra training cost and zero test cost. Extensive experiments have demonstrated that FAT surpasses existing state-of-the-art methods with higher compression ratios and easy deployment. This work sheds light on learning quantization-friendly representations, instead of designing complicated quantizers to accommodate low-bitwidth models.

Our analyses suggest that a suitable transformation could empower simple quantizers in quantization problems. The transformation is supposed to not only filter trivial components of full-precision data, but also have informative gradient. Future work will explore other transformations on quantization, and extend to other compression strategies like pruning or distillation. The effect of transformations on model robustness is also an open issue.


{\small
\bibliographystyle{ieee}
\bibliography{egbib}
}

\clearpage
\setcounter{section}{0}
\setcounter{equation}{0}
\setcounter{theorem}{0}
\setcounter{table}{0}
\setcounter{figure}{0}


\twocolumn[
\centering
\title{\LARGE{\textbf{Supplementary Materials for FAT}}}
\vspace{1cm}
]

\section{Quantization Error of FAT}
\begin{figure*}[tp]
\centering
\includegraphics[scale=0.45]{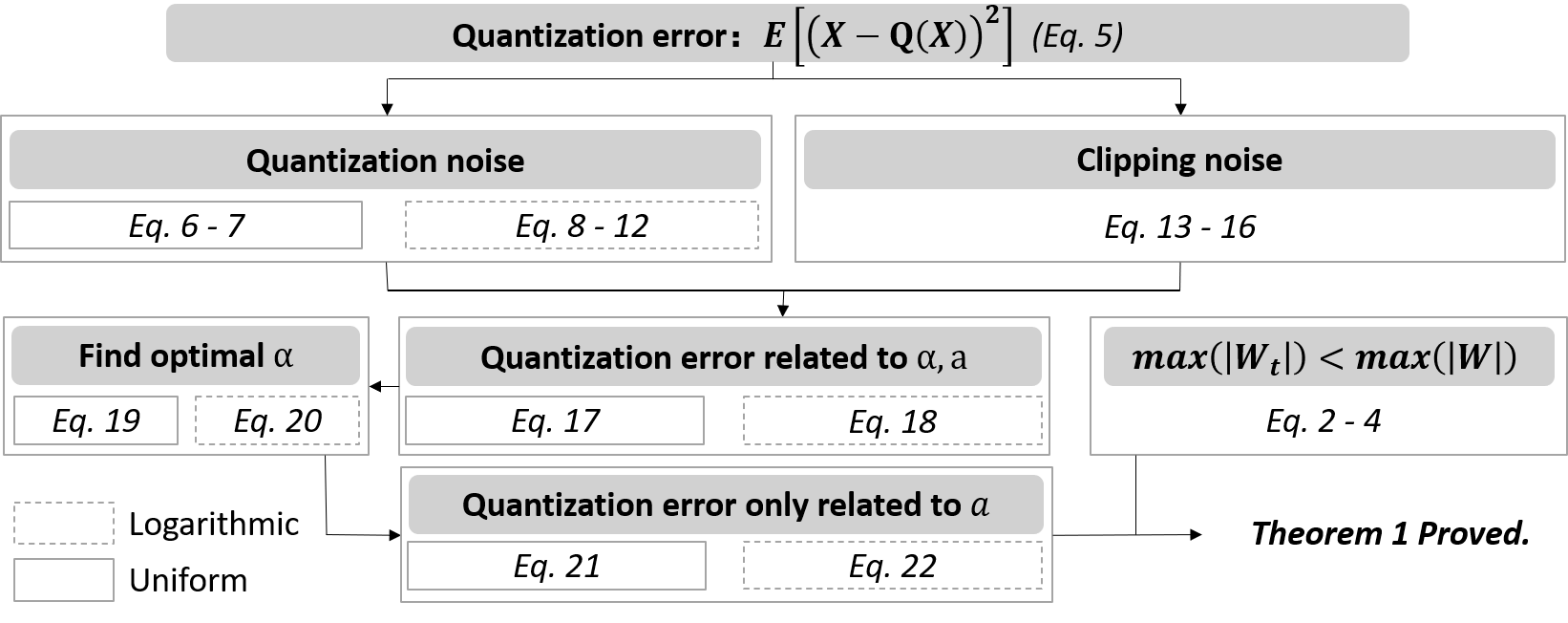}
\caption{Flowchart for the proof of Theorem 1.}
\label{fig:roadmap}
\end{figure*}

\begin{theorem}
\label{theorem:mse_sm}
Assume the given weights before transform $\mathcal{W}$ satisfies ${\rm{Laplace}(0,b)}$ distribution. Then the following two inequalities hold for both uniform and logarithmic quantization:
\begin{subequations}
\begin{equation}
\label{eq:inequal_1}
\max(|\mathcal{W}_t|) \leq \max(|\mathcal{W}|),
\end{equation}
\begin{equation}
\label{eq:inequal_2_sm}
E[(\mathcal{W}_t-{\rm Q}(\mathcal{W}_t))^2] \leq E[(\mathcal{W}-{\rm Q}(\mathcal{W}))^2].
\end{equation}
\end{subequations}
\end{theorem}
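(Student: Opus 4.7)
The plan is to attack the theorem in two stages: first establish the amplitude contraction (Eq.~\ref{eq:inequal_1}), then combine it with the closed-form MSE expressions for uniform and logarithmic quantization already derived in the main text to obtain Eq.~\ref{eq:inequal_2_sm}. This matches the roadmap sketched in Fig.~\ref{fig:roadmap}.

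For the amplitude inequality, I would exploit the fact that the mask $M$ is a pointwise contraction in the frequency domain. Since $M(i,n)\in(0,1)$ by the sigmoid, $|\hat{\mathcal{W}}_f(i,n)| = |M(i,n)|\,|\mathcal{W}_f(i,n)| \le |\mathcal{W}_f(i,n)|$ at every frequency bin. Parseval's identity then yields an $\ell_2$ contraction in the spatial domain, $\|\mathcal{W}_t(i,:)\|_2 \le \|\mathcal{W}(i,:)\|_2$, and correspondingly a reduction of the effective Laplace scale from $b$ to some $b_t$ satisfying $2b_t^2 = \mathrm{Var}(\mathcal{W}_t) \le 2b^2$. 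To pass from the $\ell_2$ bound to the required $\ell_\infty$ bound, I would invoke the tail behavior of Laplace samples: the maximum of $N$ i.i.d.\ $\mathrm{Laplace}(0,b)$ variables concentrates around $b\log N$, so a reduction in the scale parameter translates directly into a reduction in amplitude, giving $a_t = \max|\mathcal{W}_t| \le a = \max|\mathcal{W}|$.

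For the MSE inequality, I would substitute the contracted parameters $(\alpha_t, a_t, b_t)$ into the closed forms $f_u(\alpha,a)$ and $f_{\log}(\alpha,a)$ from Eqs.~\eqref{eq:mse_u}--\eqref{eq:mse_log}. Both decompose as $f_{clip}(\alpha,a,b) + f_q(\alpha)$: the quantization-noise term $f_q$ depends only on the clip threshold and the bitwidth, while the clip-noise term is monotone non-decreasing in $a$ and $b$ and decays in $\alpha$ like $e^{-\alpha/b}$. Choosing the transformed clip threshold $\alpha_t \le \alpha$ consistently with the tighter amplitude $a_t\le a$ and the smaller scale $b_t\le b$, both components shrink simultaneously, and summing yields Eq.~\ref{eq:inequal_2_sm} for both quantizer families. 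Since the uniform and logarithmic cases differ only in $f_q(\alpha)$, the same monotonicity argument applies uniformly.

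The hardest step will be rigorously translating the $\ell_2$ spectral contraction into the $\ell_\infty$ spatial contraction. In full generality a frequency mask can reshape energy concentration, so the bound cannot be purely algebraic; it relies essentially on the Laplace assumption. A secondary subtlety is verifying that $\mathcal{W}_t$ still admits a sub-exponential, ideally Laplace-like, tail so that the MSE closed forms continue to apply. I would handle this either by a sub-exponential concentration argument on the circular convolution $\mathcal{W}_t(i,:) = h_i \ast \mathcal{W}(i,:)$ with $h_i = \mathcal{F}^{-1}(M(i,:))$, or by explicitly adopting a Laplace approximation for $\mathcal{W}_t$ at the Parseval-derived scale $b_t$ and controlling the approximation error separately.
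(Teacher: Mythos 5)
Your overall architecture matches the paper's: establish the amplitude contraction first, then reduce the MSE comparison to monotonicity of a closed-form error expression in the amplitude $a$. However, both of your key steps diverge from the paper's execution in ways that leave genuine gaps.

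For the amplitude inequality, the paper does not go through Parseval at all: it writes $\mathcal{W}_t = F^{-1}(M \odot F\mathcal{W})$, notes that the sigmoid forces every mask entry into $[0,1]$, and directly asserts that this tightens the weights toward zero, i.e.\ $\max(|\mathcal{W}_t|) \le \max(|\mathcal{W}|)$. Your route gives a correct $\ell_2$ contraction, but the bridge you propose to $\ell_\infty$ --- extreme-value concentration of the maximum of i.i.d.\ Laplace samples around $b\log N$ --- is an asymptotic, in-distribution statement and cannot deliver the deterministic, per-filter inequality the theorem asserts; a frequency-domain mask can in principle concentrate energy onto fewer spatial taps and raise the peak while lowering the total energy. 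You correctly identify this as the hardest step, but the fix you sketch does not close it (to be fair, the paper's own one-line assertion does not close it rigorously either, but it is a different and more direct claim than yours).

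For the MSE inequality there are two issues. First, the paper keeps the Laplace scale $b$ fixed throughout: it derives $f_u(\alpha,a)$ and $f_{\log}(\alpha,a)$, solves the stationarity conditions for the optimal clip $\alpha^*(a)$, substitutes back to obtain $g_u(a)$ and $g_{\log}(a)$ as functions of $a$ alone, and then verifies numerically (by plotting the two curves) that both increase with $a$. Introducing a second shrinking parameter $b_t \le b$ changes the optimizer $\alpha^*$ and alters the comparison being made; it is neither needed nor what the paper does. Second, your claim that the clipping-noise term is monotone non-decreasing in $a$ at fixed $\alpha$ is unsupported: $f_{clip}$ carries the damping factor $e^{-a/b}$ multiplying a bracket $2\alpha a - (b+a)^2 - (b-\alpha)^2 = -(2b^2+2b(a-\alpha))-( a-\alpha)^2$, which is strictly negative whenever $a\ge\alpha$, so the variation of $f_{clip}$ with $a$ is a competition between a shrinking exponential and a growing negative bracket and requires an explicit derivative (or, as in the paper, a numerical check after substituting $\alpha^*(a)$). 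As written, your monotonicity step is asserted rather than proved. Your secondary concern --- that $\mathcal{W}_t$ may no longer be Laplace-distributed, so the closed forms may not strictly apply --- is a legitimate observation that the paper also glosses over, but raising it does not resolve it.
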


\begin{proof} We first show that the proposed FAT enables to tighten the weight towards zero by deactivating the weight amplitude $a$ from Eq.\ref{eq:dft_mat1} to Eq.\ref{eq:Wt_by_F_Mhat}. Then we prove that the quantization error can be approximated as two parts, quantization noise and clipping noise. The quantization error can be written as a function related with the clipping threshold and weight amplitude.  Clipping threshold is learnable during backpropagation, we assume the clipping threshold can reach its optimal value during training process. Then quantization error is positively correlated with weight amplitude only. Since the amplitude is deactivated,  the quantization error deceases via the proposed FAT. Figure~\ref{fig:roadmap} shows the flow of the proof.

 We use notations $\mathcal{W}$ and $\mathcal{W}_f$ to mean weight vectors and its frequency map, respectively. The process of 1-D Discrete Fourier transform $\mathcal{F}(\cdot)$ can be expressed by matrix multiplication with $F$ as:
\begin{subequations}
\begin{equation}
\label{eq:dft_mat1}
\mathcal{W}_f = F \mathcal{W}
\end{equation}
\begin{small}
\begin{equation}
\label{eq:dft_mat2}
F = \left[ \begin{array}{ccccc}
1 & 1 & 1 & \cdots & 1 \\
1 & \omega^1 & \omega^2 & \cdots & \omega^{(N-1)} \\
1 & \omega^2 & \omega^4 & \cdots & \omega^{2(N-1)}\\
\vdots & \vdots & \vdots & \ddots & \vdots\\
1 & \omega^{(N-1)} & \omega^{2(N-1)} & \cdots & \omega^{(N-1)(N-1)}
\end{array} \right]
\end{equation}%
\end{small}%
\end{subequations}%
where $\omega=e^{-\frac{2\pi\cdot j}{N}}$ with $j^2 = -1$. Therefore, $\mathcal{W}_t$ can be formalized using $F$ as: \begin{equation}
\label{eq:Wf_by_F}
\mathcal{W}_t = F^{-1}(M \odot F \mathcal{W}).
\end{equation}

Since Hadamard product is exchangeable, we have
\begin{equation}
\label{eq:Wt_by_F_Mhat}
\mathcal{W}_t = F^{-1}(F \mathcal{W} \odot M) ,
\end{equation}

Since the mask $M$ is generated with Sigmoid function,  all elements in mask ranges $[0,1]$ during training. Hence,  the proposed FAT helps tighten the original weights $\mathcal{W}$ towards zero. We use amplitude $a_1$ and $a_2$  to denote  $\max (|\mathcal{W}|)$ and $\max(|\mathcal{W}_t|)$, respectively. We have $a_2 \leq a_1$.

\textbf{Quantization Error Formulation.} According to \cite{banner2019post}, the quantization error can be divied into two terms, namely, \textit{quantization noise} and \textit{clipping noise}. Without loss of generality, we use  $X$ to denote the full-precision random variable, instead of using weight notation $\mathcal{W}$. $X$  is assumed to be zero-centering. The whole quantization error is formalized as below:
\begin{equation}
\begin{small}
\label{eq:3terms}
\begin{aligned}
E[(X-{\rm Q}_u(X))^2]&=\int_{-\infty}^{-\alpha}f(x)\cdot(x+\alpha)^2\\
&+\sum_{i=0}^{2^m-1}\int_{-\alpha+i\cdot \Delta}^{-\alpha+(i+1)\cdot \Delta} f(x)\cdot f(x-q_i)^2 dx\\
&+\int^{+\infty}_{\alpha}f(x)\cdot(x-\alpha)^2,
\end{aligned}
\end{small}
\end{equation}
where $\Delta = \frac{2\alpha}{2^m}$ is the approximated intervals between quantization levels under uniform quantizer, and $\alpha, m$ are the clipping value and bitwidth, respectively.  Quantization noise (the second term) considers the error within clipping threshold, and clipping noise (the first term and third term) considers the error outside clipping threshold.

In Eq.~\ref{eq:3terms},  \cite{banner2019post} ignores the actual data amplitude but treat them as $-\infty$ or $+\infty$, which is inconsistent with weight that the extreme values are usually far away from $-\infty$ or $+\infty$. In the following, we derive a more accurate expression of the quantization error by considering data amplitude. And more generally, we extend the quantization error function from uniform quantizer to logarithmic quantizer.

\textbf{Uniform Quantization Noise.}
In Eq.~\ref{eq:3terms}, the second term is the quantization noise. By assuming the density function $f(\cdot)$ is a construction of a piece-wise linear function, the quantization noisy can be approximated as:
\begin{equation}
\label{eq:term2}
\begin{aligned}
&\sum_{i=0}^{2^m-1} \int^{-\alpha + (i+1)\cdot \Delta}_{-\alpha + i\cdot \Delta} f(x) \cdot (x-q_i)^2 dx\\
&\approx\sum_{i=0}^{2^m-1} \int^{-\alpha + (i+1)\cdot \Delta}_{-\alpha + i\cdot \Delta} f(q_i) \cdot (x-q_i)^2 dx\\
&=\frac{2\cdot \alpha^3}{3 \cdot 2^{3m}} \cdot \sum_{i=0}^{2^M-1} f(q_i).
\end{aligned}
\end{equation}

By substituting $f(x) = \frac{1}{2\alpha}$ into Eq.~\ref{eq:term2}, the quantization noise can be further approximated as below:
\begin{equation}
\label{eq:est_term2}
\sum_{i=0}^{2^m-1} \int^{-\alpha + (i+1)\cdot \Delta}_{-\alpha + i\cdot \Delta} f(x) \cdot (x-q_i)^2 dx \approx \frac{\alpha^2}{3\cdot 2^{2m}}.
\end{equation}

\textbf{Logarithmic Quantization Noise.}
The quantization levels of logarithmic quantization are powers-of-two values or zero as below:
\begin{small}
\begin{equation}
\label{eq:log_quant_levels}
\alpha \times \left\{0, \pm 2^{-2^{m-1}+1}, \pm 2^{-2^{m-1}+2}, \cdots, \pm 2^{-1}, \pm 1 \right\},
\end{equation}
\end{small}
where $\alpha$ is the clipping value and $m$ is the bitwidth.

Since the quantization levels and the weight distribution are both symmetric about the x-axis, we can double the quantization noise on the positive x-axis. To approximate the error, we divide the interval $(0,\alpha]$ into $(0,2^{-2^{m-1}+1}]$ and $(2^{-2^{m-1}+1}, \alpha]$. Therefore, the quantization noise can be formalized as below:

\begin{small}
\begin{equation}
\label{eq:log_intervals}
\begin{aligned}
&2\cdot \left[\int_0^\alpha f(x)\cdot(x-{\rm Q}(x))^2 dx\right]\\
=&2\cdot \left[ \int_0^{\alpha\cdot2^{-2^{m-1}+1}} f(x)\cdot(x-{\rm Q}(x))^2 dx\right.\\
&\left.+\int_{\alpha \cdot2^{-2^{m-1}+1}}^{\alpha} f(x)\cdot(x-{\rm Q}(x))^2 dx\right]
\end{aligned}
\end{equation}
\end{small}

We assume $f(x) = \frac{1}{2\alpha}$ and all values are rounded to the midpoint of the given interval as well. Eq.~\ref{eq:log_quant_term1} and \ref{eq:log_quant_term2} calculate the quantization noise in $(0,2^{-2^{m-1}+1}]$ and $(2^{-2^{m-1}+1}, \alpha]$, respectively.:
\begin{small}
\begin{equation}
\label{eq:log_quant_term1}
\begin{aligned}
&2\cdot \int_0^{\alpha\cdot2^{-2^{m-1}+1}}f(x)\cdot(x-{\rm Q}(x))^2 dx= \frac{\alpha^2}{3}\cdot 2^{-3\cdot2^{m-1}+1}
\end{aligned}
\end{equation}
\end{small}

\begin{small}
\begin{equation}
\label{eq:log_quant_term2}
\begin{aligned}
&2\cdot\int_{\alpha \cdot2^{-2^{m-1}+1}}^{\alpha} f(x)\cdot(x-{\rm Q}(x))^2 dx\\
=& 2\cdot \sum_{i=0}^{2^{m-1}-2}\int_{\alpha\cdot x_i}^{\alpha\cdot x_{i+1}} f(x)\cdot(x-{\rm Q}(x))^2 dx\\
=& 2\cdot \frac{\alpha^3}{2\alpha}\cdot \sum^{2^{m-1}-2}_{i=0} \left[ \frac{1}{3}\cdot(x-\frac{x_i+x_{i+1}}{2})^3\right]_{ x_i}^{x_{i+1}}\\
=& \frac{\alpha^2}{21}\cdot(2^{-2}-2^{-3\cdot2^{m-1}+1}),
\end{aligned}
\end{equation}
\end{small}
where $x_i = 2^{-2^{m-1}+i+1}$, $x_{i+1} = 2^{-2^{m-1}+i+2}$.

By substituting Eq.~\ref{eq:log_quant_term1} and \ref{eq:log_quant_term2} into Eq.~\ref{eq:log_intervals}, the quantization noise of logarithmic quantization is as follows:
\begin{small}
\begin{equation}
\label{eq:log_quant_error}
\begin{aligned}
&2\cdot \left[\int_0^\alpha f(x)\cdot(x-{\rm Q}(x))^2 dx\right]\\
=& \frac{\alpha^2}{3}\cdot 2^{-3\cdot2^{m-1}+1} + \frac{\alpha^2}{21}\cdot(2^{-2}-2^{-3\cdot2^{m-1}+1})\\
=& \frac{\alpha^2}{84}\cdot\left( 1+ 3\cdot 2^{-3\cdot 2^{m-1}+4}\right).
\end{aligned}
\end{equation}
\end{small}

\textbf{Clipping Noise.}
The clipping noise considers the error outside the clipping threshold, which do not involves the setting quantization levels. Therefore, clipping noise is equivalent for both uniform quantizer and logarithmic quantizer. By observing Eq.~\ref{eq:3terms}, for symmetrical distributions around zero like ${\rm{Laplace}(0,b)}$ distribution, the first term and third term is equal. Hence, the clipping noise can be written as:
\begin{equation}
\label{eq:term1_3}
2\cdot \int_{\alpha}^{\infty}f(x)\cdot(x-\alpha)^2dx.
\end{equation}
To approximate the clipping noise more accurately, we take the real dynamic range into consideration. We denote the data amplitude $\max(|X|)$ as $a$, therefore, $x\in[-a,a]$ instead of $(-\infty, +\infty)$. By adding the limitation on the range of $x$, Eq.~\ref{eq:term1_3} is reformalulated as:
\begin{equation}
\label{eq:new_term1_3}
2\cdot \int_{\alpha}^{a}f(x)\cdot(x-\alpha)^2dx.
\end{equation}

For ${\rm Laplace}(0,b)$ distribution, its cumulative distribution function can be formalized as: 
\begin{equation}
\label{eq:cdf}
\psi(x) = \frac{e^{-\frac{x}{b}}}{2}\cdot [2\alpha-2b^2-\alpha^2 - x^2 -2(b-a)x].
\end{equation}
By submitting Eq.~\ref{eq:cdf} into Eq.~\ref{eq:new_term1_3}, the clipping noise term can be approximated as a function of $\alpha$ and $a$ as follows:
\begin{small}
\begin{equation}
\begin{aligned}
\label{eq:approx_term1_3}
2 &\cdot \int_{\alpha}^{a}f(x)\cdot(x-\alpha)^2dx = \psi(a) - \psi(\alpha)\\
&= e^{-\frac{a}{b}}\cdot [2\alpha b-2b^2-\alpha^2 - a^2 -2(b-\alpha)a]\\
&-e^{-\frac{\alpha}{b}}\cdot [2\alpha b-2b^2-\alpha^2 - \alpha^2 -2(b-\alpha)\alpha]\\
&=e^{-\frac{a}{b}}\cdot[2\alpha b + 2\alpha a - b^2 - \alpha^2 - (b+a)^2] + e^{-\frac{\alpha}{b}}\cdot 2b^2\\
&= e^{-\frac{a}{b}}\cdot [2\alpha a - (b+a)^2 - (b-\alpha)^2] + b^2 \cdot e^{-\frac{\alpha}{b}}.
\end{aligned}
\end{equation}
\end{small}

\textbf{Quantization Error related to $\alpha,a$.}
By gathering Eq.~\ref{eq:est_term2} and \ref{eq:approx_term1_3}, we get the new expression of the \textit{uniform} quantization error as a function of $\alpha$ and $a$ as below:
\begin{small}
\begin{equation}
\label{eq:new_error}
\begin{aligned}
f_{u}(\alpha,a) &= e^{-\frac{a}{b}}\cdot [2\alpha a - (b+a)^2 - (b-\alpha)^2]\\
&+ b^2 \cdot e^{-\frac{\alpha}{b}} + \frac{\alpha^2}{3\cdot 2^{2m}}.
\end{aligned}
\end{equation}
\end{small}

Putting Eq.~\ref{eq:log_quant_error} and \ref{eq:approx_term1_3} together, the logarithmic quantization error can be formalized as:
\begin{small}
\begin{equation}
\label{eq:log_mse}
\begin{aligned}
f_{log}(\alpha,a) &= e^{-\frac{a}{b}}\cdot [2\alpha a - (b+a)^2 - (b-\alpha)^2]\\
&+ b^2 \cdot e^{-\frac{\alpha}{b}} + \frac{\alpha^2}{84}\cdot(1+3\cdot 2^{-3\cdot 2^{m-1}+1}).
\end{aligned}
\end{equation}
\end{small}

Both $f_u$ and $f_{log}$ are functions of $\alpha$ and $a$.

\textbf{Finding the Optimal $\alpha$.}
Clipping value $\alpha$ is a trainable parameter during backpropagation. Ideally, we can find an optimal $\alpha^*$ to minimize quantization error function $f(\alpha, a)$. In order to make a fair comparison between $E[(\mathcal{W}_t-{\rm Q}(\mathcal{W}_t))]$ and $E[(\mathcal{W}-{\rm Q}(\mathcal{W}))^2]$, we treat $f(\alpha, a)$ as a function only related to $a$ by finding optimal $\alpha^*$ for every selected $a$.

For uniform quantization, the optimal $\alpha_u^*$ can be found by solving the following equation:
\begin{small}
\begin{equation}
\label{eq:est_alpha_opt}
\begin{aligned}
\left. \frac{\partial f_u(\alpha,a)}{\partial \alpha}\right|_{\alpha = \alpha_u^*} =&\frac{2 \alpha_u^*}{3\cdot2^{2m}}+e^{-\frac{a}{b}}\cdot(a+b-\alpha_u^*)\\
&-b\cdot e^{-\frac{\alpha^*}{b}}\\
=&0.
\end{aligned}
\end{equation}
\end{small}

Similarly, the optimal $\alpha_{log}^*$ is the solution of the equation below:
\begin{small}
\begin{equation}
\label{eq:est_alpha_opt2}
\begin{aligned}
\left. \frac{\partial f_{log}(\alpha,a)}{\partial \alpha} \right|_{\alpha=\alpha_{log}^*} =& \frac{\alpha}{41}\cdot(1+3\cdot2^{-3\cdot2^{m-1}+4})-b\cdot e^{-\frac{\alpha_{log}^*}{b}}\\
&+e^{-\frac{a}{b}}\cdot(a+b-\alpha_{log}^*)\\
=&0.
\end{aligned}%
\end{equation}%
\end{small}%
Though it is hard to find analytical solutions, we can always search for the numerical solutions.

\textbf{Quantization Error only related to $a$.} By taking Eq.~\ref{eq:est_alpha_opt} into Eq.~\ref{eq:new_error} and Eq.~\ref{eq:est_alpha_opt2} into Eq.~\ref{eq:log_mse}, we can get the expressions of quantization errors related to $a$ only. Eq~\ref{eq:mse_1} and  shows the uniform and logarithmic cases, respectively:
\begin{small}
\begin{equation}
\label{eq:mse_1}
\begin{aligned}
g_u(a)=&  e^{-\frac{a}{b}}\cdot [2\alpha_u^* a - (b+a)^2 - (b-\alpha_u^*)^2]\\
&+ b^2 \cdot e^{-\frac{\alpha_u^*}{b}} + \frac{{\alpha_u^*}^2}{3\cdot 2^{2M}},
\end{aligned}
\end{equation}

\begin{equation}
\label{eq:log_mse_a}
\begin{aligned}
g_{log}(\alpha) =& e^{-\frac{a}{b}}\cdot [2\alpha_{log}^* a - (b+a)^2 - (b-\alpha_{log}^*)^2]\\
&+ b^2 \cdot e^{-\frac{\alpha_{log}^*}{b}} + \frac{\alpha_{log}^2}{84}\cdot(1+3\cdot 2^{-3\cdot 2^{m-1}+1}).
\end{aligned}
\end{equation}
\end{small}

Figure~\ref{fig:eq13a} visualizes the two curves $g_u$ and $g_{log}$, which shows quantization errors keep rising when $a$ increases in both two cases. Because $a_2=\max (|\mathcal{W}_t|) < a_1=\max (|\mathcal{W}|)$, we can conclude that: $E[(\mathcal{W}_t-{\rm Q}(\mathcal{W}_t))^2] \leq E[(\mathcal{W}-{\rm Q}(\mathcal{W}))^2]$.

\begin{figure}
\centering
\includegraphics[scale = 0.8]{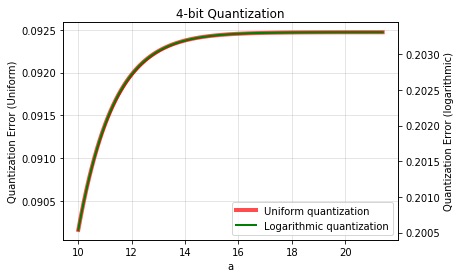}
\caption{With optimal clipping value $\alpha^*$ for every different amplitudes $a$, the curves show the quantization error goes up as amplitude increases for both uniform and logarithmic quantization.}
\label{fig:eq13a}
\end{figure}
\end{proof}

\section{Informative Discretization Gradient}
In the following we derive the the gradients of transformed weights $\mathcal{W}_t$ to the mask $M$ and the original weights $\mathcal{W}$ during the backward propagation, respectively. To show the chain rule clearly and avoid the redundancy of symbols, we employ some new notations to denote the intermediate variables:
\begin{subequations}
\begin{equation}
t_1 = \mathcal{F}(\mathcal{W}(i,:)),
\end{equation}
\begin{equation}
t_2 = M(i,:) \odot t_1.
\end{equation}
\end{subequations}
Then $\mathcal{W}_t$ can be represented as $\rm \mathcal{F}^{-1}(t_2)$.

\subsection{Gradient of $\mathcal{W}_t$ to $M$}
\begin{theorem}
\label{theorem:w_de_M}
The gradient of $\mathcal{W}_{t}$ to $M$ during backward propagation is $\frac{\partial \mathcal{W}_{t}}{\partial M} = \sum_{k_1=0}^{N-1} \sum_{i,k_2}\frac{\partial \mathcal{W}_{t}(i,k_1)}{\partial M(i,k_2)}$, with 

\begin{small}
\begin{equation}
\label{eq:grad_wde_M}
\frac{\partial \mathcal{W}_{t}(i,k_1)}{\partial M(i,k_2)} = \frac{1}{N} \sum_{n=0}^{N-1} \mathcal{W}(i,n) \cdot \cos{\left(\frac{2\pi (k_1 - n) k_2}{N}\right)},
\end{equation}
\end{small}
where $i=0,\cdots,C_{out}-1$ and $k_1,k_2=0,\cdots,N-1.$
\end{theorem}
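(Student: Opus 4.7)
The plan is to apply the chain rule through the three-stage composition $\mathcal{W}\mapsto t_1\mapsto t_2\mapsto \mathcal{W}_t$, use the explicit DFT/iDFT formulas to collapse the intermediate sums, and invoke the conjugate symmetry induced by $\mathcal{W}$ being real to reduce the complex exponential to a cosine. Because the chain rule for a single scalar output with respect to a single scalar mask entry will produce one surviving sum (all other $k$-terms vanish when differentiating $t_2(i,k)=M(i,k)t_1(i,k)$), the bookkeeping should be quite short once the formulas are written out.

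First I would substitute the definitions given in Eq.~\ref{eq:pre_map} and Eq.~\ref{eq:5} to obtain the closed-form expression
\begin{equation*}
\mathcal{W}_t(i,k_1)=\frac{1}{N}\sum_{k=0}^{N-1}M(i,k)\,\mathcal{W}_f(i,k)\,e^{j\frac{2\pi}{N}k_1k}=\frac{1}{N}\sum_{k=0}^{N-1}\sum_{n=0}^{N-1}M(i,k)\,\mathcal{W}(i,n)\,e^{j\frac{2\pi}{N}(k_1-n)k}.
\end{equation*}
Differentiating both sides with respect to the scalar $M(i,k_2)$ kills every term in the outer sum except $k=k_2$, which immediately yields
\begin{equation*}
\frac{\partial \mathcal{W}_t(i,k_1)}{\partial M(i,k_2)}=\frac{1}{N}\sum_{n=0}^{N-1}\mathcal{W}(i,n)\,e^{j\frac{2\pi}{N}(k_1-n)k_2}.
\end{equation*}
This already has the shape of Eq.~\ref{eq:grad_wde_M}, except that the exponential is complex rather than a cosine.

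The last step is to argue that the imaginary part vanishes, so that the complex exponential can be replaced by its cosine component. I would do this by pairing each index $n$ with its reflected index $N-n$ (and, symmetrically, pairing $k_2$ with $N-k_2$): because $\mathcal{W}(i,n)$ is real, the sine contributions from the pair cancel, while the cosine contributions add. Equivalently, one can note that $\mathcal{W}_t$ is a real-valued quantity (it is the iDFT of the conjugate-symmetric spectrum $M\odot\mathcal{W}_f$), hence its derivative with respect to a real parameter $M(i,k_2)$ must also be real, so one may take the real part of the complex expression without loss. After this reduction the claimed formula
\begin{equation*}
\frac{\partial \mathcal{W}_t(i,k_1)}{\partial M(i,k_2)}=\frac{1}{N}\sum_{n=0}^{N-1}\mathcal{W}(i,n)\cos\!\left(\frac{2\pi(k_1-n)k_2}{N}\right)
\end{equation*}
follows directly.

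The only genuinely delicate step is the cosine reduction at the end. The bulk of the derivation is mechanical substitution and collapsing of one sum, but justifying why the sine contribution vanishes requires an appeal either to a pairing argument on $n\leftrightarrow N-n$ or to the realness of $\mathcal{W}_t$; this is the part of the proof that will need care so that the complex-domain chain rule is interpreted consistently (Wirtinger-style) with respect to real parameters. The outer summation over $(k_1,k_2)$ in the statement of the theorem is then simply the assembly of these scalar partials into the full Jacobian.
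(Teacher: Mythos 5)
Your derivation follows essentially the same route as the paper's: chain rule through $\mathcal{W}\mapsto t_1\mapsto t_2\mapsto\mathcal{W}_t$, observing that only the $k=k_2$ term of the masked spectrum depends on $M(i,k_2)$, collapsing the sums to $\frac{1}{N}\sum_n\mathcal{W}(i,n)e^{j\frac{2\pi}{N}(k_1-n)k_2}$, and then passing to the real part. The formula and the bookkeeping are correct.

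The one step where your justification does not hold up is the cosine reduction. The pairing $n\leftrightarrow N-n$ does \emph{not} cancel the sine contributions: the imaginary part $\frac{1}{N}\sum_n\mathcal{W}(i,n)\sin\bigl(\frac{2\pi}{N}(k_1-n)k_2\bigr)$ is generically nonzero (e.g.\ $N=4$, $k_1=0$, $k_2=1$, $\mathcal{W}(i,\cdot)=(0,1,0,0)$ gives $-1/4$), since $\mathcal{W}(i,n)$ and $\mathcal{W}(i,N-n)$ are unrelated. Your fallback argument via realness of $\mathcal{W}_t$ is also delicate: while the learned mask happens to satisfy $M(i,k)=M(i,N-k)$ (it is built from $\|\mathcal{W}_f\|$, which inherits conjugate symmetry), the partial derivative with respect to the \emph{single} entry $M(i,k_2)$ perturbs that entry without its mirror $M(i,N-k_2)$, which breaks the conjugate symmetry of $M\odot\mathcal{W}_f$ and makes $\mathcal{W}_t$ complex along that perturbation direction. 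The honest justification — which is also what the paper implicitly does when it says ``taking the real part'' — is that the forward pass retains only the real part of the iDFT output, so the gradient is $\operatorname{Re}$ of the complex linear coefficient, and $\operatorname{Re}$ commutes with differentiation in the real parameter $M(i,k_2)$. With that reading your proof is fine and matches the paper's; without it, the claimed identity is the real part of the true complex derivative rather than the derivative itself.
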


\begin{proof}
In the main paper, Figure 4 illustrates how to get the gradient matrix from the 3-D gradient tensor. Therefore, here we focus on how to compute the gradient entries.

According the chain rule, the gradient $\frac{\partial \mathcal{W}_{t}(i,k_1)}{\partial M(i,k_2)}$ can be calculated in the following flow:
\begin{equation}
\label{eq:th1_chain}
\frac{\partial \mathcal{W}_t(i,k_1)}{\partial M(i,k_2)} = \frac{\partial \mathcal{W}_t(i,k_1)}{\partial t_2(k1)} \cdot \frac{\partial t_2(k_1)}{\partial M(i,k_2)}.
\end{equation}
It is worth noting that since $t_2$ is obtained by $M(i,:)$ and $t_1$ using element-wise product, $M(i,k_2)$ is only related to $t_2(k_2)$, and has nothing to do with $t_2(n)$ where $n \neq k_2$.

The two terms on the right side of Eq.~\ref{eq:th1_chain} are easy to compute, whose results are:
\begin{subequations}
\begin{equation}
\label{eq:th1_1}
\frac{\partial t_2(k_1)}{\partial M(i, k_2)} = \sum_{n=0}^{N-1}\mathcal{W}(i,n)\cdot e^{-j\cdot \frac{2\pi}{N}\cdot k_2 \cdot n},
\end{equation}
\begin{equation}
\label{eq:th1_2}
\frac{\partial \mathcal{W}_t(i,k_1)}{\partial t_2(k_1)} = \frac{1}{N} \cdot e^{j\cdot \frac{2\pi}{N} \cdot k_1 \cdot k_2}.
\end{equation}
\end{subequations}

By substituting Eq.~\ref{eq:th1_1} and \ref{eq:th1_2} into Eq.~\ref{eq:th1_chain} and taking the real part, we can get Eq.~\ref{eq:grad_wde_M}.
\end{proof} 

\begin{figure}
\centering
\includegraphics[scale=0.5]{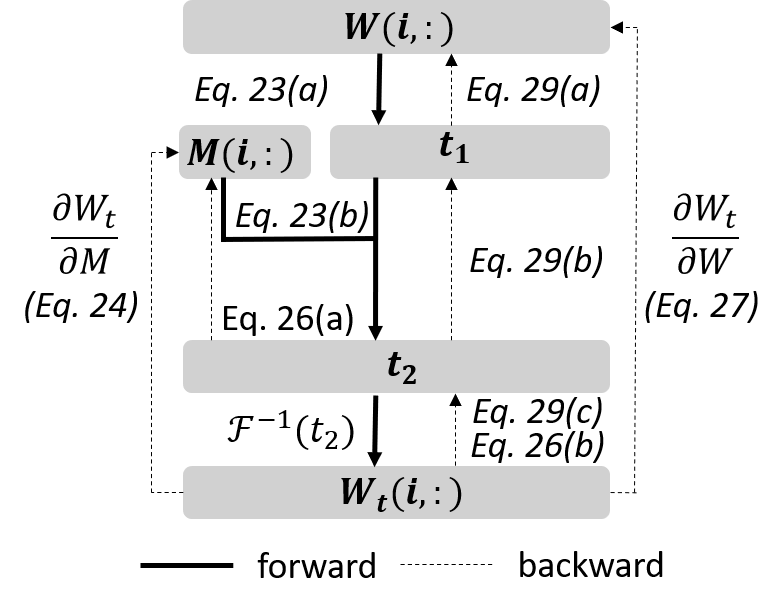}
\caption{Flowchart for computing $\frac{\partial \mathcal{W}_t}{\partial M}$ and $\frac{\partial \mathcal{W}_t}{\partial \mathcal{W}}$.}
\label{fig:my_label}
\end{figure}

\subsection{Gradient of $\mathcal{W}_t$ to $\mathcal{W}$}
\begin{theorem} 
\label{theorem:w_de_w}
Given a convolutional filter with index $i$, the gradient of $\mathcal{W}_{t}$ to $\mathcal{W}$ during backpropagation is a symmetric matrix that accumulates mask effect from all frequency bases,

\begin{small}
\begin{equation}
\label{eq:grad_wde_w_sm}
\frac{\partial \mathcal{W}_{t}(i,k_1)}{\partial \mathcal{W}(i,k_2)} = \frac{1}{N}\sum_{n=0}^{N-1} M(i,n) \cdot \cos{\left( \frac{2\pi (k_1 - k_2) n}{N}\right)},
\end{equation}
\end{small}
where $i=0,\cdots,C_{out}-1$ and $k_1,k_2=0,\cdots,N-1.$
\end{theorem}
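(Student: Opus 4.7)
The plan is to reuse the chain-rule decomposition that was already set up for Theorem~\ref{theorem:w_de_M}, but now propagate the derivative one more stage back, from $t_1=\mathcal{F}(\mathcal{W}(i,:))$ to $\mathcal{W}(i,:)$ itself. In the notation of the preceding proof, the computational graph is
\[
\mathcal{W}(i,:)\;\xrightarrow{\mathcal{F}}\;t_1\;\xrightarrow{\,M(i,:)\odot\,}\;t_2\;\xrightarrow{\mathcal{F}^{-1}}\;\mathcal{W}_t(i,:),
\]
and the three elementary Jacobians are immediate from the explicit formulas for the DFT and its inverse:
\[
\frac{\partial t_1(n)}{\partial \mathcal{W}(i,k_2)}=e^{-j\frac{2\pi}{N}n k_2},\qquad
\frac{\partial t_2(n)}{\partial t_1(n')}=M(i,n)\,\delta_{nn'},\qquad
\frac{\partial \mathcal{W}_t(i,k_1)}{\partial t_2(n)}=\frac{1}{N}\,e^{j\frac{2\pi}{N}k_1 n}.
\]
The middle factor is diagonal because the masking step is a Hadamard product, so no cross terms arise.

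Next I would assemble the chain rule $\partial \mathcal{W}_t(i,k_1)/\partial \mathcal{W}(i,k_2)=\sum_n \frac{\partial \mathcal{W}_t(i,k_1)}{\partial t_2(n)}\frac{\partial t_2(n)}{\partial t_1(n)}\frac{\partial t_1(n)}{\partial \mathcal{W}(i,k_2)}$, which collapses into the single sum
\[
\frac{\partial \mathcal{W}_t(i,k_1)}{\partial \mathcal{W}(i,k_2)}
=\frac{1}{N}\sum_{n=0}^{N-1} M(i,n)\,e^{j\frac{2\pi}{N}n(k_1-k_2)}.
\]
The right-hand side depends on $k_1,k_2$ only through the lag $k_1-k_2$, which will immediately give the claimed symmetry $\partial \mathcal{W}_t(i,k_1)/\partial \mathcal{W}(i,k_2)=\partial \mathcal{W}_t(i,k_2)/\partial \mathcal{W}(i,k_1)$ (i.e., the Jacobian is a symmetric circulant) once the imaginary part has been handled.

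The only nontrivial step is justifying the reduction from the complex exponential to the real cosine in Eq.~\eqref{eq:grad_wde_w_sm}. Here I would argue as follows: since $\mathcal{W}(i,:)$ is real, $\mathcal{W}_f(i,:)$ is Hermitian-symmetric about index $N/2$, so the magnitudes $\|\mathcal{W}_f(i,n)\|$ satisfy $\|\mathcal{W}_f(i,n)\|=\|\mathcal{W}_f(i,N-n)\|$; because $M$ is obtained from these magnitudes via the real linear map $W_m^T$ followed by the pointwise sigmoid (Eq.~\eqref{eq:3}), the mask inherits the same evenness $M(i,n)=M(i,N-n)$. Pairing the term indexed by $n$ with the one indexed by $N-n$ in the sum above, the sine components cancel while the cosine components double, leaving exactly the expression in Eq.~\eqref{eq:grad_wde_w_sm}. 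Equivalently, $\mathcal{W}_t(i,:)$ is real, so the partial derivative of one of its coordinates with respect to the real variable $\mathcal{W}(i,k_2)$ must be real, which forces the imaginary part of the sum to vanish and permits taking the real part without loss. The bookkeeping with the three Jacobians is routine; verifying this symmetry argument is the only place where real care is needed.
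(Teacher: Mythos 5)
Your proposal is correct and follows essentially the same route as the paper's proof: the identical chain-rule factorization through $t_1=\mathcal{F}(\mathcal{W}(i,:))$ and $t_2=M(i,:)\odot t_1$, with the same three elementary Jacobians, collapsed into a single sum over frequencies. The only point where you go beyond the paper is the passage from the complex exponential to the cosine --- the paper simply ``discards the imaginary part,'' whereas your evenness argument $M(i,n)=M(i,N-n)$ (inherited from the Hermitian symmetry of $\|\mathcal{W}_f\|$ through the real linear map $W_m$ and the pointwise sigmoid) actually shows the imaginary part vanishes, which is the more careful justification.
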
 

\begin{proof}
According to the chain rule, the gradient $\frac{\partial \mathcal{W}_{t}(i,k_1)}{\partial \mathcal{W}(i,k_2)}$ can be represented as 
\begin{small}
\begin{equation}
\label{eq:th2_chain_rule}
\frac{\partial \mathcal{W}_t (i,k_1)}{\partial \mathcal{W}(i, k_2)} = \sum_{n = 0}^{N-1}\frac{\partial \mathcal{W}_t(i,k_1)}{\partial t_2(n)} \cdot \frac{\partial t_2(n)}{\partial t_1(n)} \cdot \frac{\partial t_1(n)}{\mathcal{W}(i,k_1)}.%
\end{equation}%
\end{small}%
It is worth noting that every element in $t_2$ contains the information from $\mathcal{W}(i,k_2)$ due to the Fourier Transform. The expressions of the three terms at the right end of Eq.~\ref{eq:th2_chain_rule} are:
\begin{subequations}
\begin{equation}
\label{eq:th2_1}
\frac{\partial t_1(n)}{\partial \mathcal{W}(i,k_1)} = e^{-j\cdot \frac{2\pi}{N} \cdot n k_1},
\end{equation}
\begin{equation}
\label{eq:th2_2}
\frac{\partial t_2(n)}{t_1(n)} = M(i,n),
\end{equation}
\begin{equation}
\label{eq:th2_3}
\frac{\partial \mathcal{W}_t(i, k_1)}{t_2(n)} = \frac{1}{N}\cdot e^{j\cdot \frac{2\pi}{N}\cdot k_1 \cdot n}.
\end{equation}
\end{subequations}

By substituting Eq.~\ref{eq:th2_1} to \ref{eq:th2_3} to Eq.~\ref{eq:th2_chain_rule}, and discarding the imaginary part of the results, we get Eq.~\ref{eq:grad_wde_w_sm}.

In the following, we prove that if the elements in $M$ are all set as $1$, the transform is an identity mapping $\mathcal{W}_t =  F^{-1}(M \odot F\mathcal{W}) = F^{-1}F\mathcal{W}=\mathcal{W}$:
\begin{equation}
\begin{small}
\begin{aligned}
\frac{\partial \mathcal{W}_{t}(i,k_1)}{\partial \mathcal{W}(i,k_2)} =& \frac{1}{N}\sum_{n=0}^{N-1} \cos{\left( \frac{2\pi (k_1 - k_2) n}{N}\right)}\\
=& \begin{cases}
\frac{1}{N}\sum_{n=0}^{N-1} \cos{(0)}, k_1 = k_2\\
\frac{1}{N}\cdot {\rm real}\left(\frac{1-e^{j\cdot\frac{2\pi}{N}(k_1-k_2)\cdot N}}{1-e^{j\cdot\frac{2\pi}{N}\cdot(k_1-k_2)}}\right), k_1 \neq k_2
\end{cases}\\
=& \begin{cases}
\frac{1}{N}\sum_{n=0}^{N-1} 1, k_1 = k_2\\
\frac{1}{N}\cdot {\rm real}\left(\frac{1-1}{1-e^{j\cdot\frac{2\pi}{N}\cdot(k_1-k_2)}}\right), k_1 \neq k_2
\end{cases}\\
=& \begin{cases}
1, k_1 = k_2\\
0, k_1 \neq k_2
\end{cases}.
\end{aligned}
\end{small}
\end{equation}
In this special case, the mask allows weights in all frequencies flow to the quantized model by 100\% percent. Without filtering trivial weight component, the gradient matrix degenerates to an identity matrix and the discretization gradient degenerates to the STE, which is used in previous methods.

\end{proof} 

\section{Effectiveness of Mask}
\label{sec:MSE_compare}
In this section, we verify that the proposed mask in FAT is able to remove the quantization-useless information in weight, instead of  constraining the amplitude of weight simply. Multiplying a small coefficient $\beta \in (0,1)$ on weight is a vanilla way to constrain the amplitude of weight. By doing so, the estimated standard derivation decreases linearly under Laplace distribution. The evaluate metric is defined as:
\begin{equation}
\begin{small}
\label{eq:MSE_compare}
MSE = MSE(\mathcal{W},\mathcal{W}_{t}) + MSE(\mathcal{W}_t, {\rm Q}(\mathcal{W}_{t})),%
\end{small}%
\end{equation}%
which evaluates the distance between 1) original weight and transformed weight, 2) transformed weight and quantized weight. We compare three transforms applied on weight, $W_t = F^{-1}(M \odot F \mathcal{W})$, $W_t = 0.5 \mathcal{W}$ and $W_t = 0.75 \mathcal{W}$. Table~\ref{tab:MSE_compare} shows the results on a 4-bit ResNet34. We can observe that the MSE with our proposed transform is steadily small across different layers.
\begin{table}[t]
\scriptsize
\centering
\setlength{\tabcolsep}{2mm}{
\begin{tabular}{cccc}
\toprule
No. Layer &   $MSE (\beta=0.5)$ & $MSE (\beta=0.75)$ & $MSE$ (Ours)\\
\midrule
0  &  0.54 & 0.21  & \textbf{0.11} \\
5  & 0.51 & 0.17 &\textbf{0.07}\\
10  & 0.52 & 0.17  & \textbf{0.07}\\
15  & 0.51 & 0.18 & \textbf{0.09} \\
20  & 0.51 & 0.16 & \textbf{0.04}\\
25 & 0.54 & 0.20 & \textbf{0.09}\\
30  & 0.70 & 0.34 & \textbf{0.30}\\
\bottomrule
\end{tabular}
}
\caption{Error comparison of different transforms to tighten the weight. Simply reducing the amplitude by deceasing standard deviation cannot reduce quantization error and keep information of original weight simultaneously. Instead, the  mask in FAT constrains the amplitude by removing the redundant frequency components, which not only reduces quantization error, but also preserves the dominant information in original weight.}
\label{tab:MSE_compare}
\end{table}

\section{Categorization of Quantization Methods}
In this section, we give a categorization that shows different approaches employed by the recent state-of-the-art methods, including mixed-precision, adaptive or self-defined quantization level and learnable training policy to train quantizers. The commonly used learnable training policy involves noise injection, reinforcement learning or weight pruning. Mixed-precision or adaptive quantization level increases the  difficulty of deployment, and learnable training policy makes the method not very easy to implement during training process.

As shown in Table~\ref{table1_sm}, we compare all methods appeared in the main paper, including WAGE \cite{wu2018training}, LQ-Net \cite{zhang2018lq}, PACT\cite{choi2018pact}, RQ \cite{louizos2018relaxed}, UNIQ \cite{baskin2018uniq}, DQ \cite{tung2018deep}, BCGD \cite{baskin2018nice} \cite{tung2018deep}, DSQ \cite{gong2019differentiable}, QIL \cite{jung2019learning}, HAQ \cite{wang2019haq},  APoT \cite{li2020additive}, HMQ \cite{habi2020hmq} DJPQ \cite{wang2020differentiable}, LSQ \cite{esser2020learned}.

Rather than learning complicated quantizers to fit the pretrained full-precision values,  the proposed FAT attempts to transform the weights to quantization-friendly representation. Therefore, FAT enjoys efficiency during both training process and deployment process.

\begin{table}[ht]
\scriptsize
	\begin{center}
 	\setlength{\tabcolsep}{3mm} 
	\caption{Categorization of quantization methods using different approaches, involving mixed-precision, adaptive or self-defined quantization level or learnable training policy. ``Q. Level'' denotes adaptive non-uniform quantization levels.}
		\label{table1_sm}
		\begin{tabular}{l|ccc}
		\toprule
		Methods & Mixed-precision & Q. Level & Learnable Training \\
		\hline
		WAGE &$\times$  & $\times$ &$\surd$ \\
		LQ-Net&$\surd$ &$\surd$& $\times$  \\
		PACT &$\times$ &$\surd$& $\times$  \\
		RQ &$\times$ &$\surd$ & $\surd$  \\
		UNIQ &$\times$ &$\times$ & $\surd$  \\
		DQ &$\times$ &$\times$ & $\surd$  \\
		BCGD &$\times$ &$\times$ & $\surd$  \\
		DSQ   &$\times$ & $\times$ & $\surd$ \\
		QIL & $\times$ & $\surd$ & $\times$ \\
        HAQ & $\surd$ & $\surd$ & $\surd$ \\
        APoT & $\times$ & $\surd$ & $\times$ \\
        HMQ &$\surd$ & $\times$ &  $\surd$ \\
        DJPQ &$\surd$ & $\times$ & $\surd$ \\
        LSQ & $\times$ & $\surd$ & $\times$ \\
        \textbf{FAT (Ours)} &  $\times$  & $\times$  & $\times$ \\
        \bottomrule
		\end{tabular}
	\end{center}
\end{table}
\section{Training Details}
We training our model from the pretrained weights. The bias in convolutional layers are removed for parameters reduction. Before transforming, we normalize the weights to stabilize training. In the CIFAR-10 dataset, the training epoch is set as 300. Batch size is set as 256. We use stochastic gradient descent (SGD) as the optimizer with an initial learning rate of 0.04. The learning rate decreases by scaling 0.1 after 150 and 225 epochs. The weight decay is employed with coefficient 5e-4.  In the ImageNet dataset, we train 120 epochs. The initial learning rate is set as 0.01 and then decay by scaling 0.1 after 30, 60, and 90 epochs, respectively. We adopt weight decay with the coefficient 1e-4. 

\section{Experiments on Logarithmic Quantizer}
In the main paper, we compare our FAT applied on uniform quantizer, with state-of-the-art quantization methods. In this section, we provide experimental results applied on logarithmic quantizer. As shown in Table \ref{table2_sm} and Table \ref{table3_sm}, we can observe that, on both ImageNet and CIFAR-10 datasets, the proposed FAT applied on logarithmic quantizer is able to achieve comparable performance with full-precision model on various network architectures. Even in 2-bit setting for both weight and activation, FAT competes against full-precision model with acceptable performance drop.
\begin{table}[ht]
\scriptsize
	\begin{center}
 	\setlength{\tabcolsep}{3.5mm} 
	\caption{Performance of FAT applied on logarithmic quantizer with different bitwidths for weight (W) ad activation (A). Acc@1 and  Acc@5 denote top-1 and top-5 accuracy in percentage.}
		\label{table2_sm}
		\begin{tabular}{l|l|lll}
			\toprule
			Architecture & Methods & W/A  & Acc@1 & Acc@5 \\
			\midrule
			\multirow{4}{*}{ResNet-18}& Full-precision & 32/32 & 70.2 & 89.4\\
			~&FAT(Ours) &4/4 &{68.8} & {88.6} \\
			~&FAT(Ours) &3/3 & 68.7 & 88.3 \\
			~&FAT(Ours) &2/2 & 64.3 & 85.5 \\
            \hline
			\multirow{4}{*}{ResNet-34}&Full-precision & 32/32 & 73.7 & 91.3 \\ 
            ~&FAT(Ours) & 4/4  & 73.3 & 91.2  \\
            ~&FAT(Ours) &3/3 & 73.2 & 91.2 \\
            ~&FAT(Ours) &2/2 & 70.1 & 89.4 \\
            \bottomrule
		\end{tabular}
	\end{center}
\end{table}

\begin{table}[ht]
\scriptsize
    \begin{center}
        \setlength{\tabcolsep}{3.5mm} 
		\caption{Performance of FAT applied on logarithmic quantizer on CIFAR-10 dataset with different bitwidths.}
        \label{table3_sm}
        \small{
		\begin{tabular}{l|l|ll}
			\toprule
			Architecture & Methods & W/A  & Accuracy \\
			\midrule
			\multirow{4}{*}{VGG-Small} &Full-precision & 32/32& 93.1 \\
            ~&FAT(Ours) & 4/4  &{93.8}\\
            ~&FAT(Ours)  & 3/3  & {93.7}	\\
            ~&FAT(Ours)  & 2/2  &{93.3}	\\
            \hline
			\multirow{4}{*}{ResNet-20} &Full-precision & 32/32 & 91.6 \\ 
			~&FAT(Ours)  & 4/4  & {92.0}	\\	
			~&FAT(Ours)  & 3/3  & {92.2}	\\	
			~&FAT(Ours)  & 2/2  & {91.1}	\\
	        \hline
			\multirow{4}{*}{ResNet-56} &Full-precision & 32/32 &  93.2 \\
            ~&FAT(Ours) & 4/4 & {93.8}\\
			~&FAT(Ours)  & 3/3  & {93.7}	\\
			~&FAT(Ours)  & 2/2  & {93.3}	\\
            \bottomrule
		\end{tabular}
		}   
	\end{center}
\end{table}

\section{Importance of Weights in the Frequency Domain}
\begin{figure}
\centering
\includegraphics[scale = 0.5]{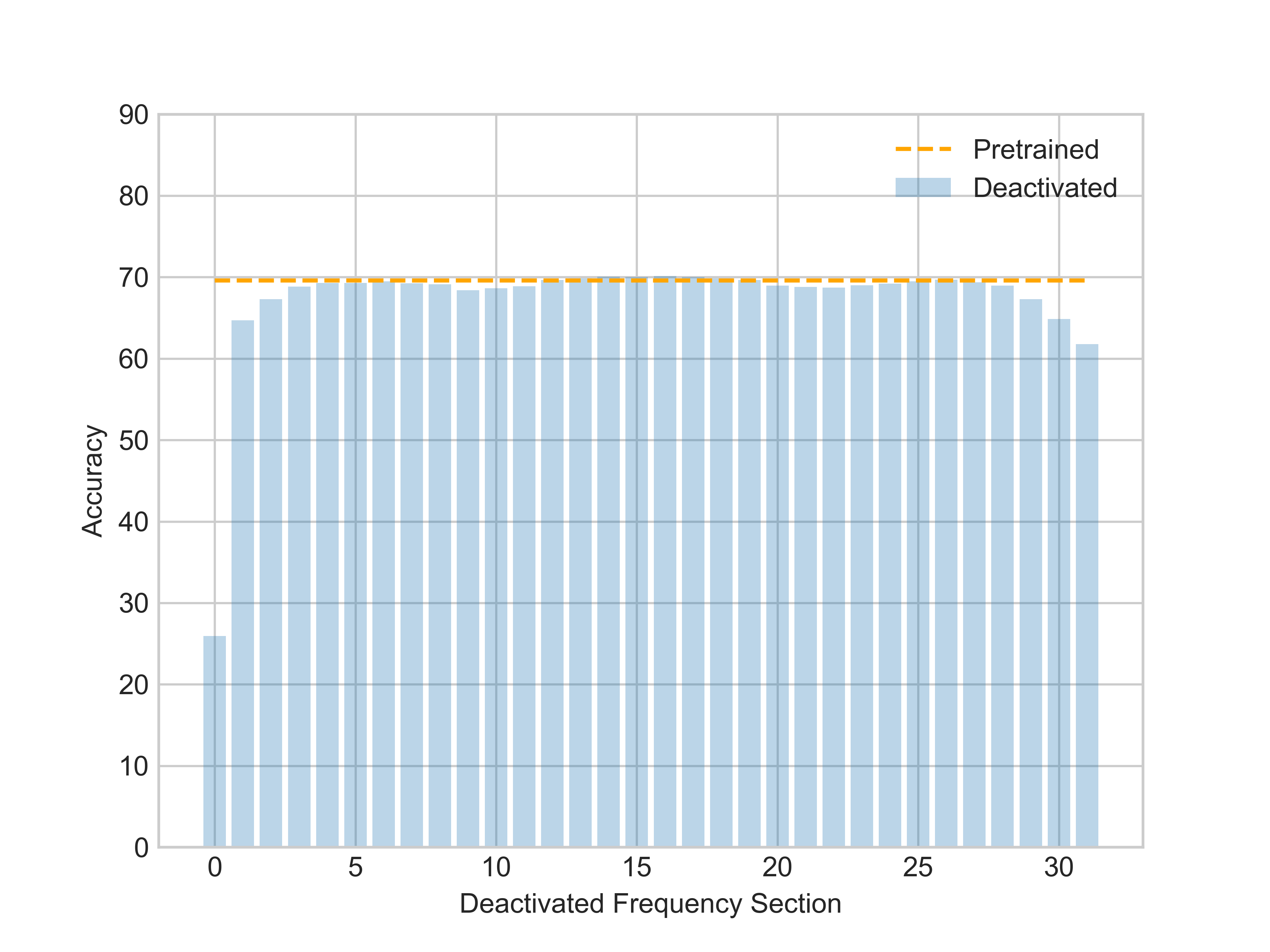}
\caption{Effect of deactivating frequencies in different sections on accuracy. We report the accuracy of pretained ResNet-18 on ImageNet dataset without training. Informative weight components are concentrated on the low-frequency zones. On the other hand, deactivating weights in high frequencies does not hurt the performance. Therefore, the weights on high-frequency component tends to be redundant, which do not need to flow to a capacity-limited quantized model.}
\label{fig:freq}
\end{figure}

In this section, we examine the effect of weights on different frequencies further. For all layers, we map the pretrained weights from spatial domain to frequency domain via Fourier Transform. Then, we divide all frequencies into 32 parts from low to high \textit{(high frequency is shifted to center)}. Instead of learning the mask via gradient descent, we softly mask one frequency part each time by halving the spectral density in this frequency part. By doing so, the weights are deactivated in the selected frequency part, while being kept in other frequency part.

As shown in Figure~\ref{fig:freq},  the performance curve drops rapidly when low frequencies are deactivated. Since the capacity of low-bitwidth model is limited, each bit should be taken full advantage of information extraction during quantization. From the frequency-domain perspective, the proposed spectral transform learns to keep informative frequencies (low frequencies) and mask off trivial frequencies  (high frequencies) regardless of quantizers, thereby achieving competitive performance.
\end{document}